\ifcvprfinal\pagestyle{empty}\fi
\begin{document}

\title{Take it in your stride: Do we need striding in CNNs?}

\author{Chen Kong \hspace{2cm} Simon Lucey\\
Carnegie Mellon University\\
{\tt\small \{chenk, slucey\}@andrew.cmu.edu}
}

\maketitle

\begin{abstract}
Since their inception, CNNs have utilized some type of striding
operator to reduce the overlap of receptive fields and spatial
dimensions. Although having clear heuristic motivations (i.e. lowering
the number of parameters to learn) the mathematical role of striding
within CNN learning remains unclear. This paper offers a novel and
mathematical rigorous perspective on the role of the striding operator
within modern CNNs. Specifically, we demonstrate theoretically that one can always
represent a CNN that incorporates striding with an equivalent
non-striding CNN which has more filters and smaller size. Through this
equivalence we are then able to characterize striding as an additional
mechanism for parameter sharing among channels, thus reducing training
complexity. Finally, the framework presented in this paper offers a
new mathematical perspective on the role of striding which we
hope shall facilitate and simplify the future theoretical analysis of
CNNs.  
\end{abstract}

\section{Introduction}
Convolutional Neural Networks (CNNs)~\cite{lecun1990handwritten,
  lecun1998gradient, krizhevsky2012imagenet} have facilitated a dramatic
increase in the performance of perceptual tasks throughout various fields including image and signal processing~\cite{gatys2015neural, ulyanov2016texture, johnson2016perceptual, dong2016image}, speech recognition~\cite{bengio2003neural, hinton2012deep, mikolov2013cient}, and computer vision~\cite{farabet2013learning, simonyan2014very, he2016deep}.
Almost all CNNs used in the above applications employ some sort of
striding operator, but to date there has been limited theoretical
analysis of its role other than as a heuristic for lowering the
degrees of freedom within the network. For the purposes of this paper
we shall refer to striding in the context of convolution, where the
stride refers to the relative offset applied to filter kernel. Classical convolution implies a stride of one,
however, non-unity stride values are commonly entertained within CNN
literature (see Figure~\ref{fig:teaser} for
a visualization). Herein, when we state that a
CNN does not employ striding we are actually implying that it is using
only convolutional operators with unity stride. 

The role of striding has been championed within CNN
architectures as: (i) it can reduce spatial resolution, leading to
computational benefits; and (ii) can reduce the overlap of receptive
fields. Even though these two explanations provide some motivations to
a certain degree, they are still largely superficial. 

\begin{figure}[t]
    \centering
    \includegraphics[width=\linewidth]{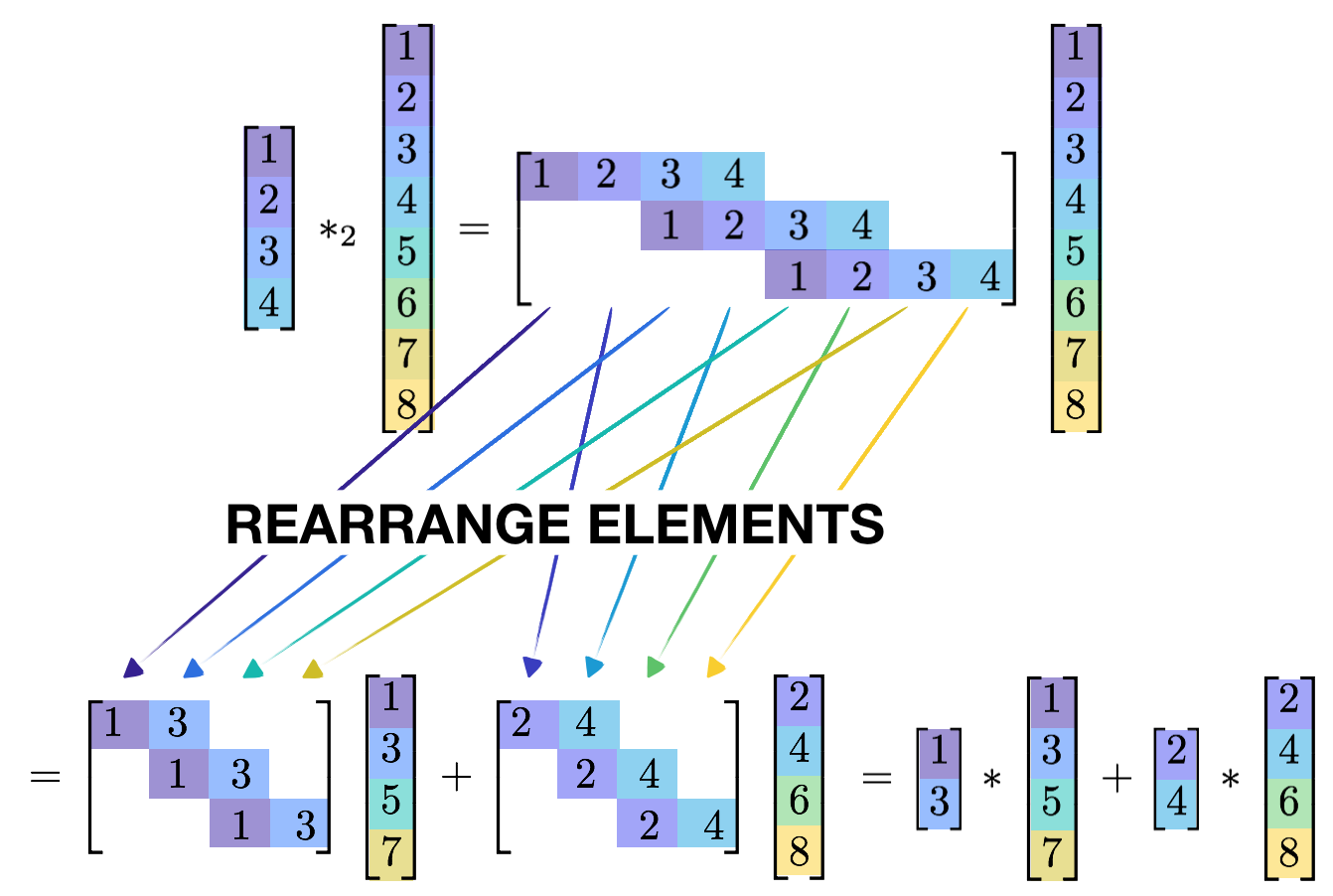}
    \caption{A toy example to show how 1-D convolution of stride two
      can be reduced to stride one by simply rearranging the elements
      in both filters and signal. The operator $*_2$ denotes a
      convolution operator with stride of two. One can see by
      rearranging the columns in the banded strided Toeplitz matrix,
      the convolution of stride two is equivalent to the summation of
      the conventional convolutional operator with stride one.}
    \label{fig:teaser}
\end{figure}

In this paper we offer a new mathematical tool to characterize
theoretically the role of striding within modern CNNs, and why their
employment has been so crucial for the empirical success of these
networks across a myriad of perceptual tasks. To facilitate this
characterization we ask a fundamental question: is striding necessary
within a CNN?  The short answer is \textbf{no}, when evaluating a
pre-trained model. This paper demonstrates that any
feedforward CNN utilizing non-unity strided convolution can be
evaluated equivalently as a feedforward CNN that employs only unity strided
convolution. This claim is based on a simple yet elegant observation
that any non-unity strided convolution can be equivalently simplified to
a classical convolution with unity stride, but with the additional filters of
smaller size. Figure~\ref{fig:teaser} visualizes this insight through a toy example.
One can see that the convolution between vectors $[1,2,3,4]^T$ and
$[1,2,3,4,5,6,7,8]^T$ with stride two is equivalent to the summation
of two convolutions with stride one. 
If one considers each smaller filter as a channel, then the summation of two convolutions can be reinterpreted as a two-channel convolution with unity stride.
This insight is at the heart of our paper.

Striding, however, is still useful. We further demonstrate that our
proposed simplification strategy actually increases the parameter
space of the CNN implying an increase in the capacity of the
network. Therefore during training, the striding operator reduces the
parameter space by forcing parameter sharing among different channels
and potentially helping the generalization properties of the network. 

\noindent\textbf{Contributions:} We make the following contributions:
\begin{itemize}
    \item We establish a clear mathematical definition of strided
      convolution and unveil an equivalence between multi-stride and
      multi-channel convolutions. 
    \item Theoretically demonstrate that any feed forward CNN
      employing striding has a mathematically equivalent non-striding
      CNN architecture during evaluation.  
    \item We reinterpret striding as a tool for sharing parameters
      along channels, and argue that this connection gives a more
      thorough and theoretical explanation for why 
      striding is still an invaluable tool when designing CNN
      architectures. 
\end{itemize}

\section{Related Work}
In the history of CNNs, some architectures, \eg AlexNet~\cite{krizhevsky2012imagenet}, ZFNet~\cite{zeiler2014visualizing}, GoogLeNet~\cite{szegedy2015going}, ResNet~\cite{he2016deep}, \etc, utilize large filters with non-unity stride, while some architectures, \eg VGG net~\cite{simonyan2014very} and more, utilize smaller filters with unity stride.
Despite of such common usage of striding, none of them explains the reasons for their designed stride size in a clear mathematical way.
This makes striding most like a heuristic choice.

Recently, Springenberg~\etal~\cite{springenberg2014striving} were
devoted to pursuing simpler CNN architectures and questioned the
necessity of different components in the canonical pipeline.
They found that max-pooling can simply be replaced by a convolutional layer with increased stride
without loss in accuracy on several image recognition benchmarks.
Based on this finding, they proposed a novel architecture that
consists solely of convolution layers utilizing ReLU as the sole non-linearity.
Although their work does not focus on the role of striding, the proposed replacement implies that striding possibly serves similar functionality to a certain type of pooling.
Further, due to the linear essence of convolutions, convolutions with large stride is more preferable.
This offers a novel perspective to understanding striding.

More recently, Papyan~\etal~\cite{papyan2016convolutional} proposed to reinterpret the forward pass of CNNs as a thresholding pursuit of signals modeled through a novel Multi-Layer Convolutional Sparse Coding(ML-CSC) model.
This reinterpretation gave a clear mathematical meaning, objective and model to CNNs, which can be in turn used to analyze the guarantees for the success of the forward pass.
Specifically, the mutual coherence of a learned convolutional
dictionary serves a major role in deciding the uniqueness of recovery of signals in ML-CSC model, and thus affecting the success of the forward pass.
Based on this theory, they proposed that strides not only bring
computational benefit, but also some theoretical benefits in terms of
guarantees on uniqueness.
On the one hand, striding lowers the mutual coherence of convolutional
dictionary, thus leading to more non-zeros allowed per stripes. On the other hand, strides decrease the length of stripes.
These twofold together encourage a sparse solutions to the ML-CSC
model and thus a higher possibility of success that the forward pass
will generate unique codes. However, due to the lack of sufficient
experimental support, their interpretation of CNNs remains a
theoretical hypothesis.

\begin{figure}[t]
    \captionsetup[subfigure]{labelformat=empty}
    \begin{tabular}{cc}
        \centering
        \multirow{1}{*}[3.5em]{\subfloat[{\normalsize (a) $\Hv$}]{\includegraphics[width=0.145\linewidth]{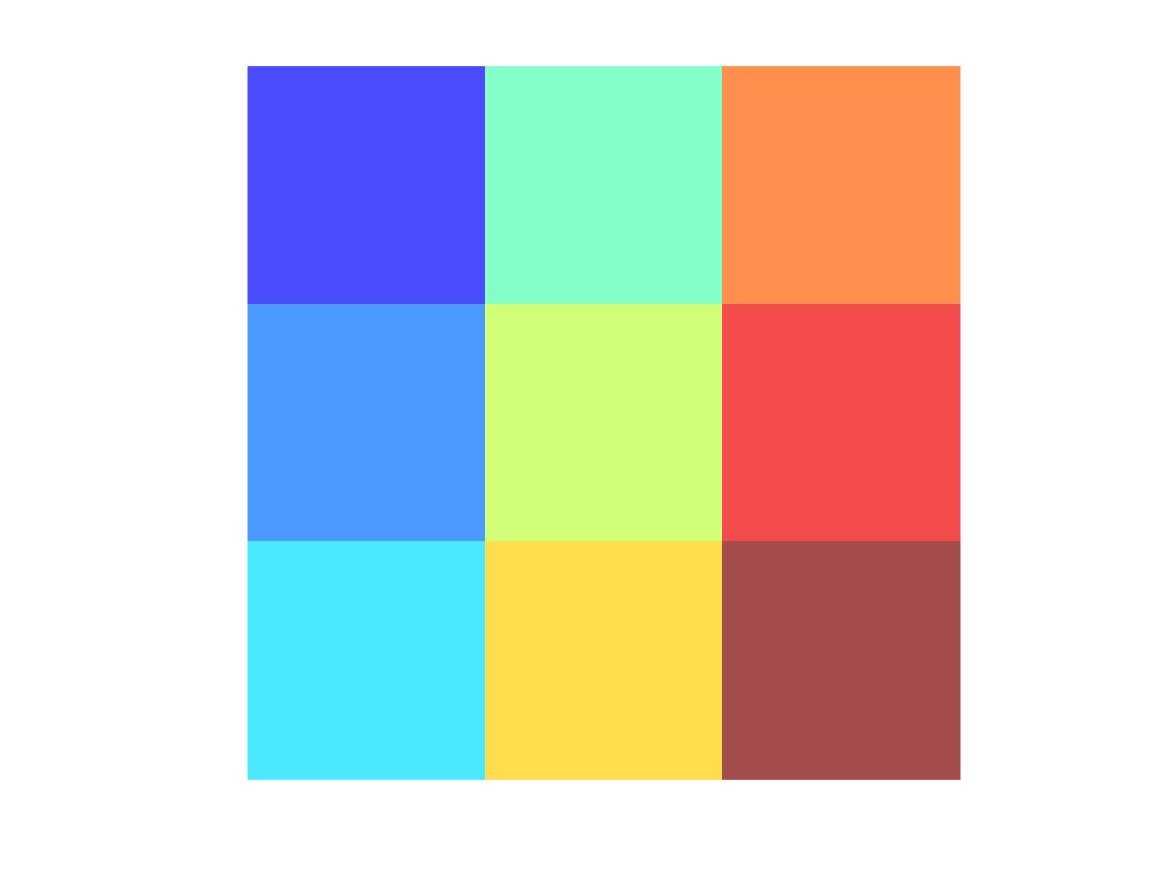}}} &
        \hspace{-4mm}
        \subfloat[$\Hc(1, 1, :, :)$]{\includegraphics[width=0.25\linewidth]{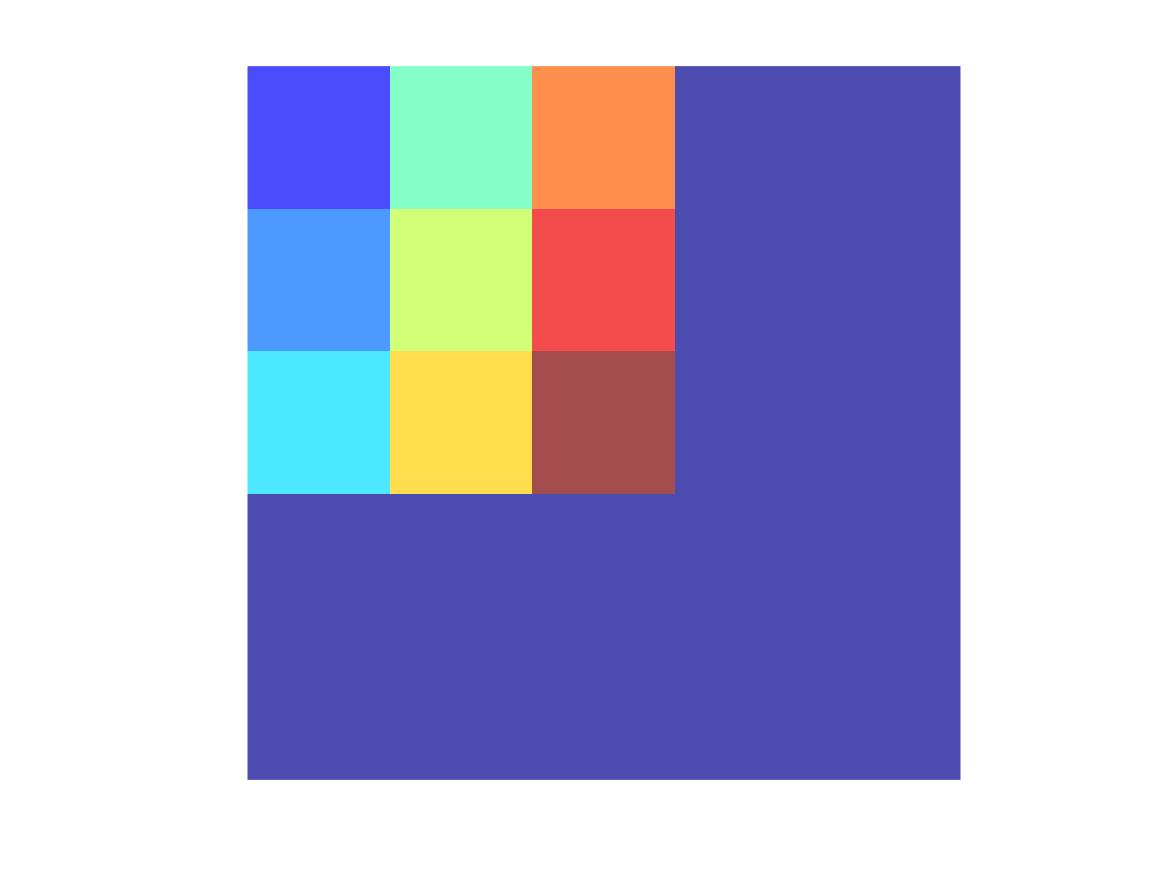}}
        \subfloat[$\Hc(1, 2, :, :)$]{\includegraphics[width=0.25\linewidth]{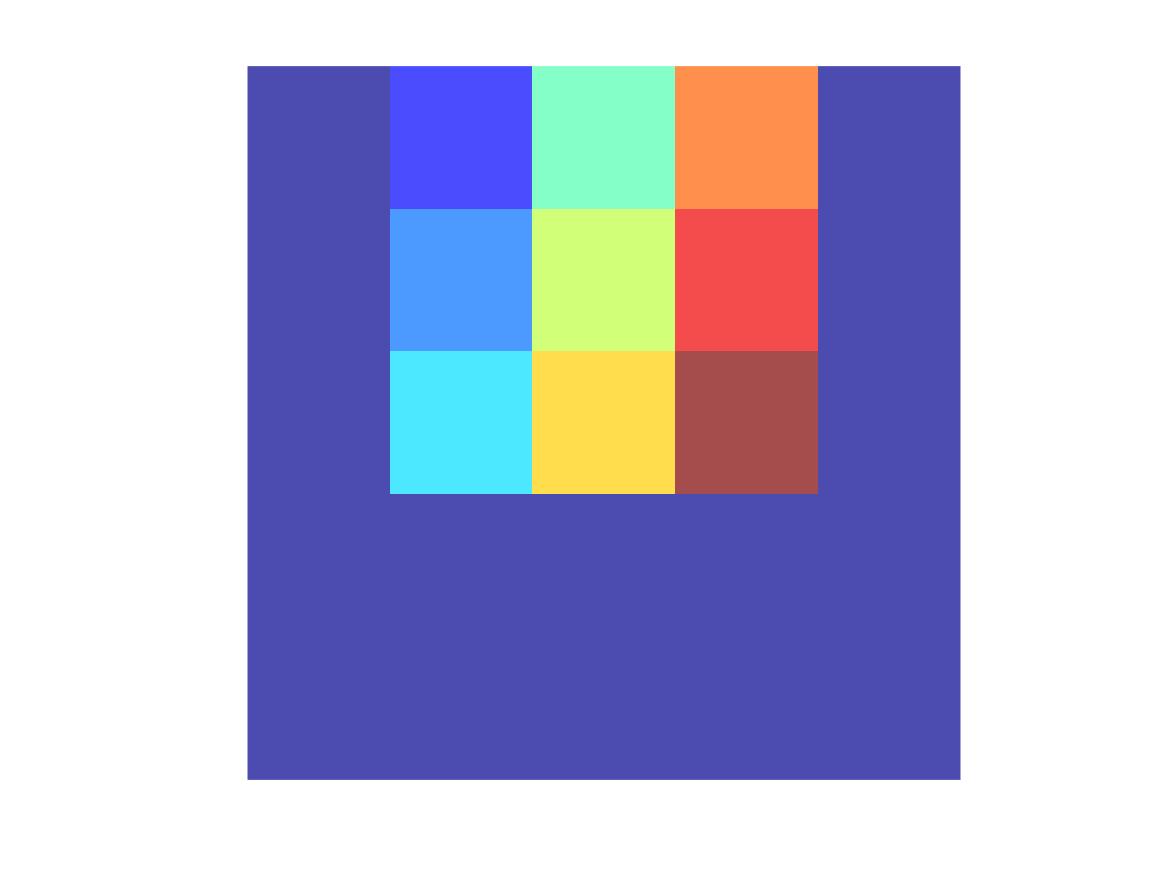}}
        \subfloat[$\Hc(1, 3, :, :)$]{\includegraphics[width=0.25\linewidth]{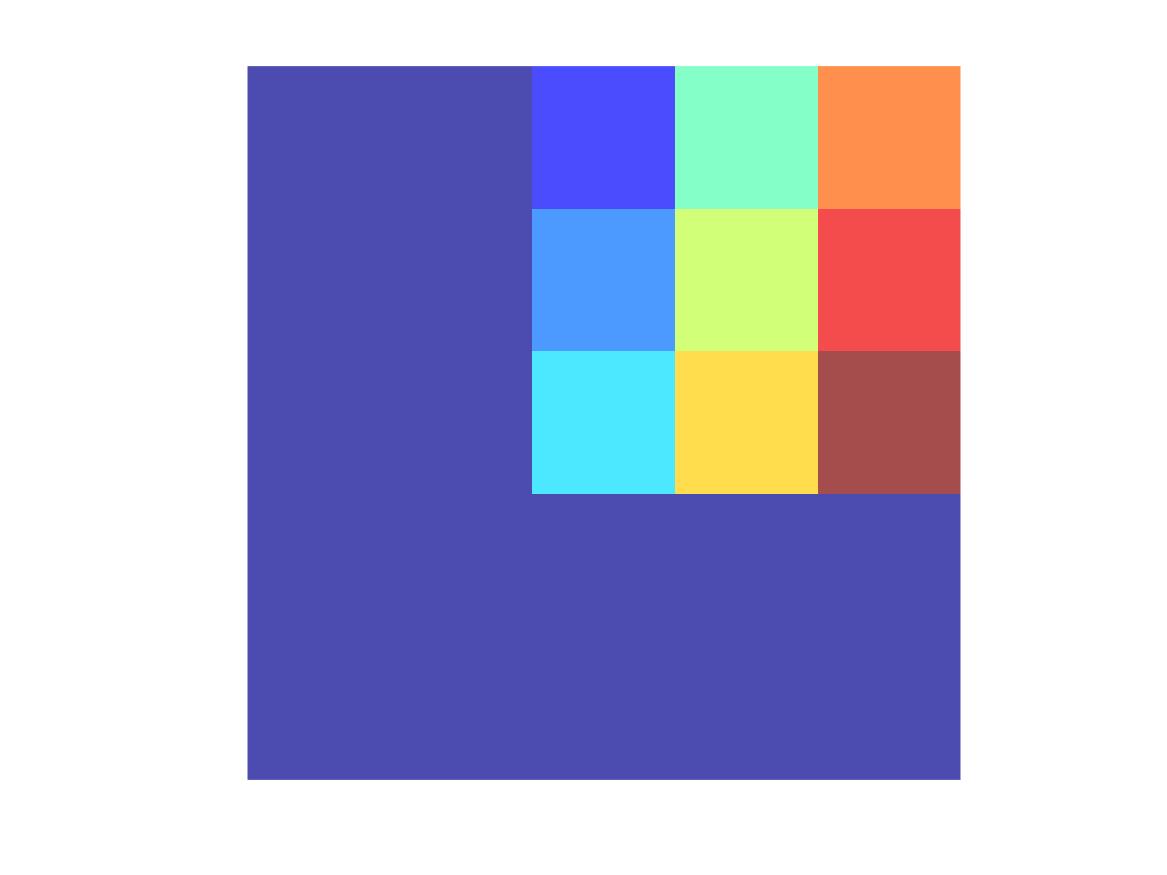}}\\

        \multirow{2}{*}[3em]{\includegraphics[width=0.13\linewidth]{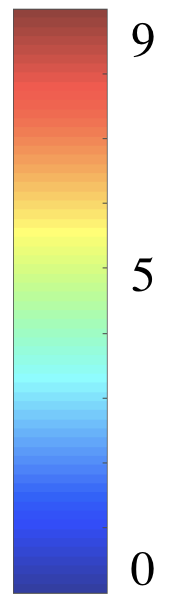}} &
        \hspace{-4mm}
        \subfloat[$\Hc(2, 1, :, :)$]{\includegraphics[width=0.25\linewidth]{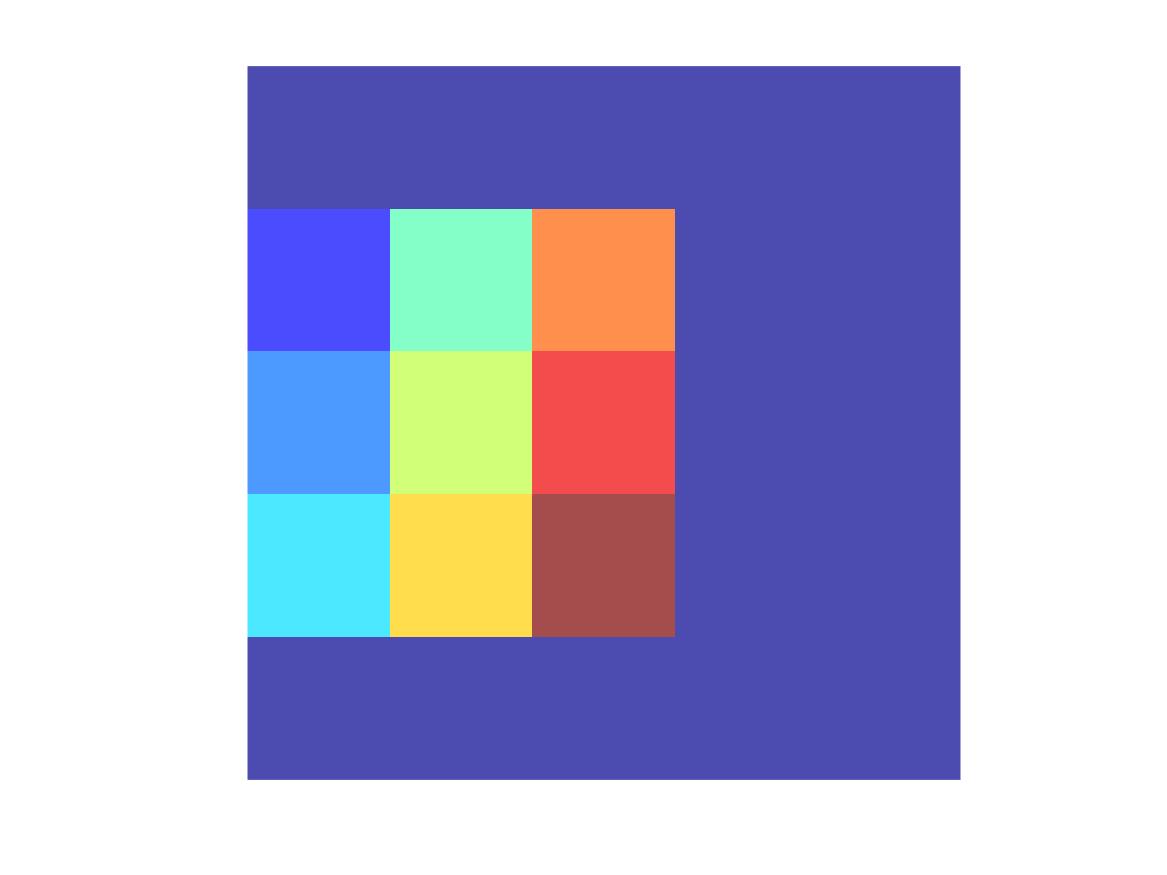}}
        \subfloat[$\Hc(2, 2, :, :)$]{\includegraphics[width=0.25\linewidth]{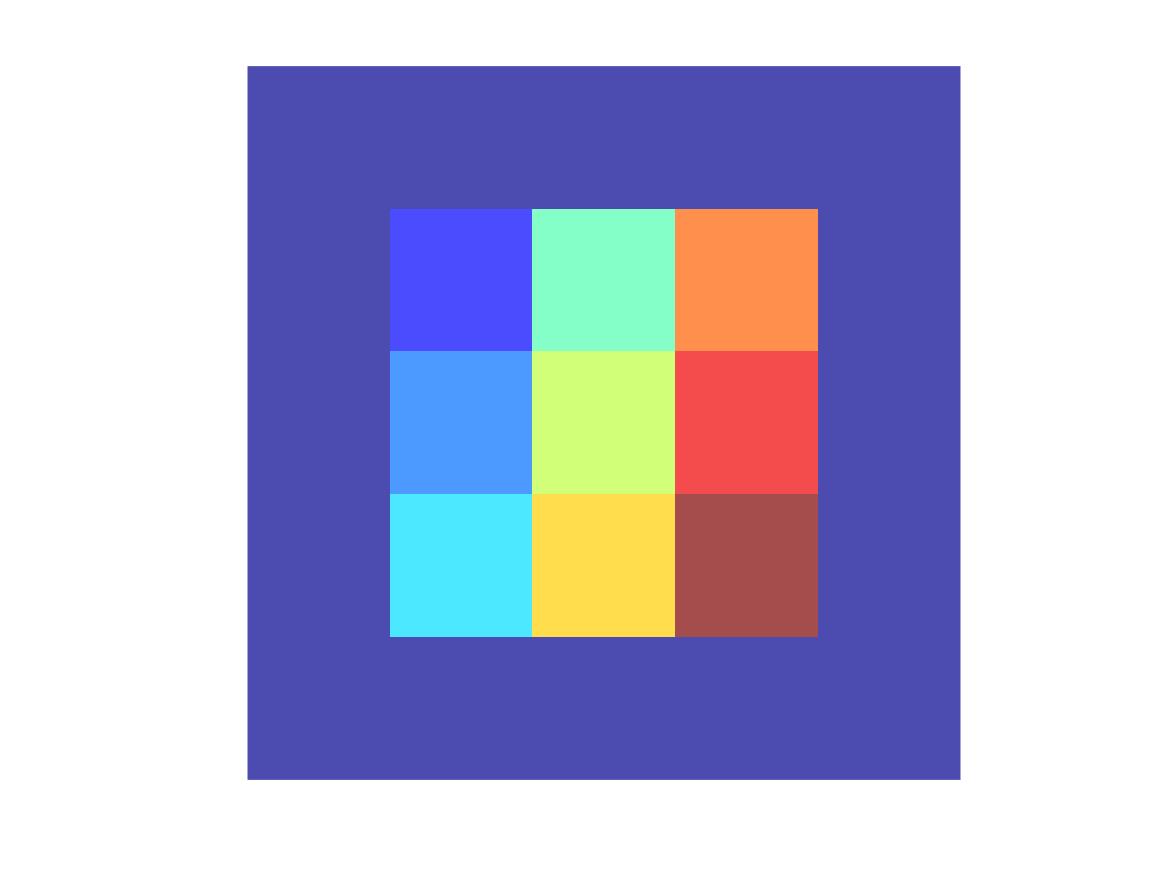}}
        \subfloat[$\Hc(2, 3, :, :)$]{\includegraphics[width=0.25\linewidth]{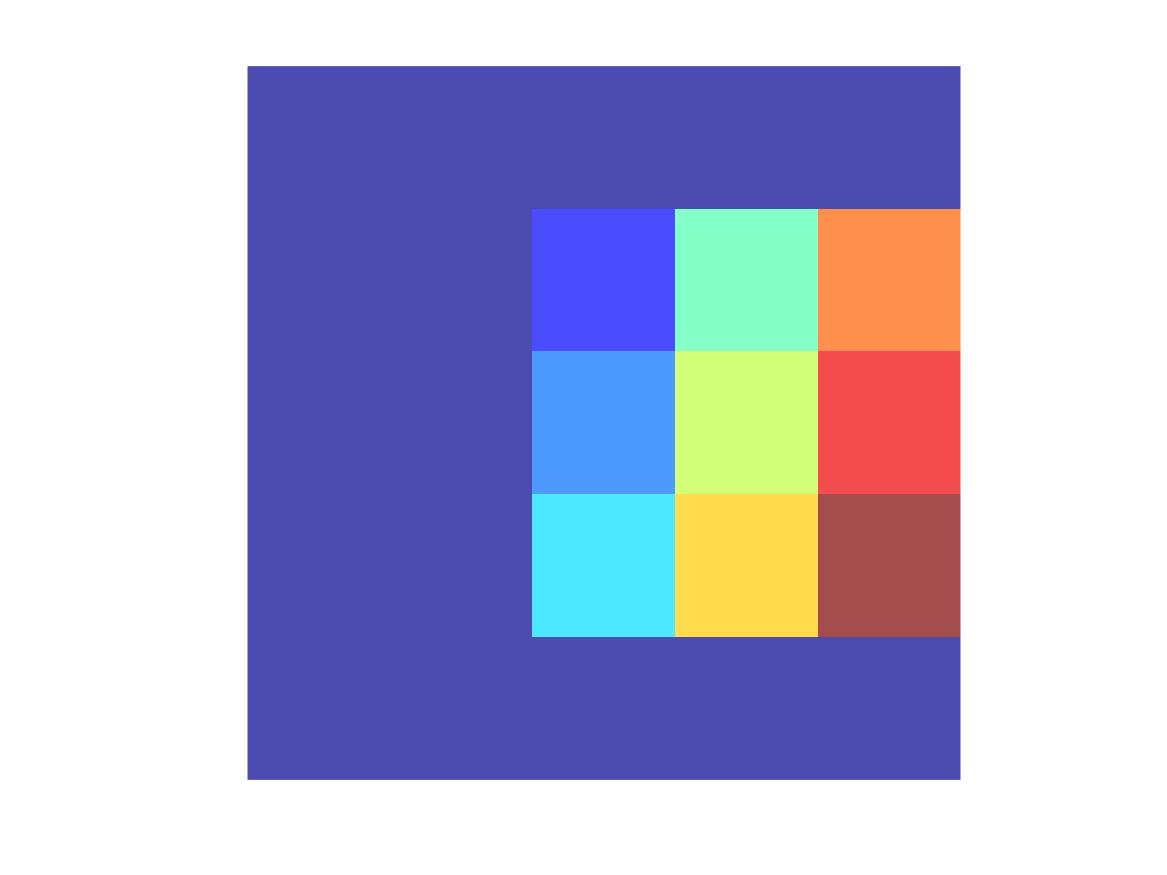}}\\

        &
        \hspace{-4mm}
        \subfloat[$\Hc(3, 1, :, :)$]{\includegraphics[width=0.25\linewidth]{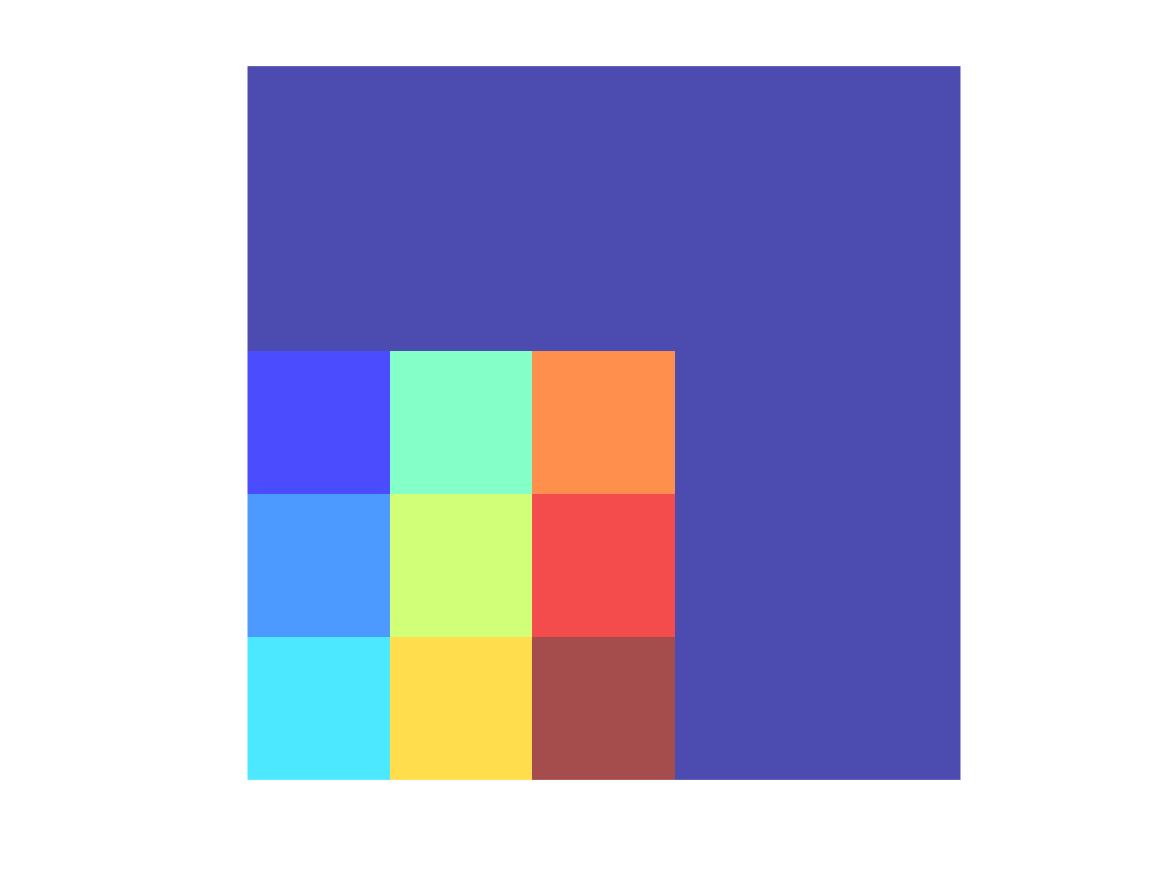}}
        \subfloat[$\Hc(3, 2, :, :)$]{\includegraphics[width=0.25\linewidth]{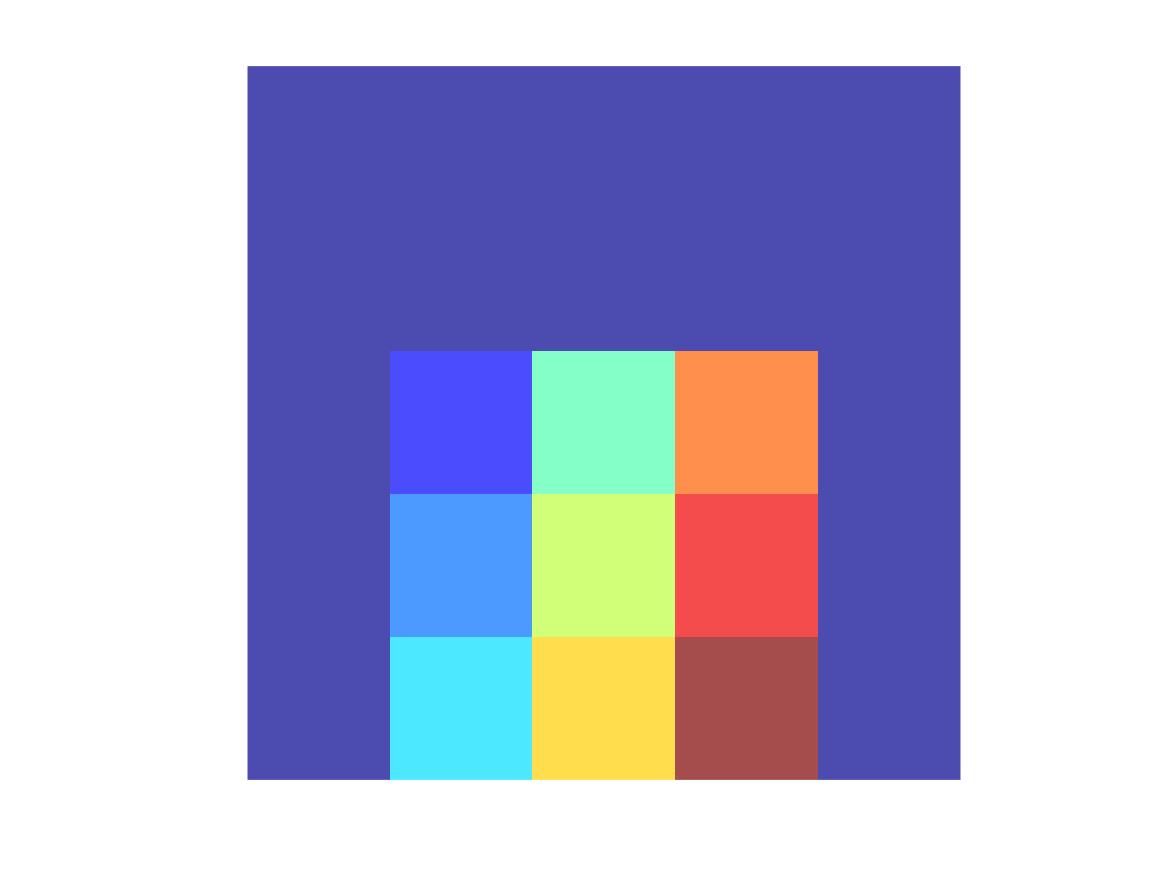}}
        \subfloat[$\Hc(3, 3, :, :)$]{\includegraphics[width=0.25\linewidth]{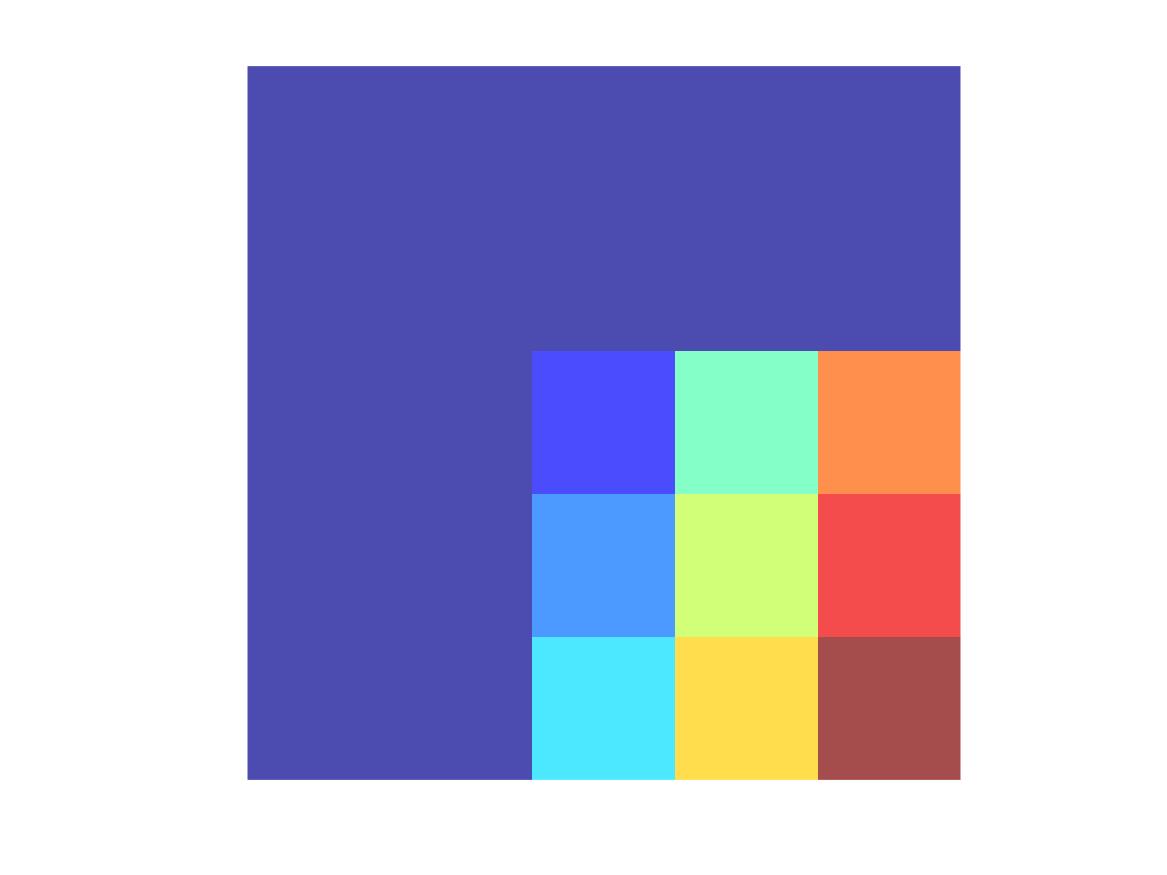}}\\

        (b) Colorbar & 
        \hspace{-4mm}
        (c) Convolutional Tensor $\Hc$
    \end{tabular}
    \caption{An example of convolutional tensors. $\Hc$ is a $3\times3\times5\times5$ convolutinoal tensor corresponding to the convolution between the filter $\Hv$ and a $5\times5$ image.}
    \label{fig:ex-conv-tensor}
\end{figure}

\section{Prerequisite: Convolution as Tensor Product}
Before considering strides, as prerequisite we first define tensor product and use it to represent convolutions.
Convolution as a linear operator can always be represented as a matrix or tensor multiplication.
For example, in one-dimensional space, convolution of two vectors $\hv$ and $\xv$ are known to equal to the multiplication of a Toeplitz matrix generated by $\hv$ and the vector $\xv$.
Now, we show this equivalence also holds for two-dimensional space.
\begin{definition}
    \label{def:tensor-product}
    (Tensor product):
    Given a four-dimensional tensor $\Hc\in\RR^{m\times n\times d \times c}$ and a matrix $\Xv\in\RR^{c \times d}$, the product of $\Hc$ and $\Xv$ is a matrix with dimension $m\times n$.
    Formally,
    \begin{equation}
        \Hc\Xv \in \RR^{m\times n},~\st~[\Hc\Xv]_{i, j} = \sum_{k=1}^d \sum_{\ell=1}^c \Hc_{i,j,k,\ell}\Xv_{\ell, k}.
    \end{equation}
\end{definition}

Further, inspired by Toeplitz matrices, we define a certain set of four-dimensional tensors that share a specific structured pattern.
\begin{definition}
    \label{def:conv-tensor}
    (Convolutional Tensor):
    Given an image filter $\Hv\in\RR^{a\times b}$, the convolutional tensor $\Hc$ corresponding to $\Hv$ is defined as a four-dimensional tensor such that
    \begin{equation}
        \Hc(i, j, i:i+a-1, j:j+b-1) = \Hv,
    \end{equation}
    where $i, j \in \ZZ^+$.
    The operator $:$ denotes tensor slicing.
\end{definition}

Similar to a Toeplitz matrix, a convolutional tensor actually covers all possible shifted version of the corresponding image filter, shown in Figure~\ref{fig:ex-conv-tensor}.
One may also notice that the size of a convolutional tensor depends not only on the size of corresponding image filter, but also on the size of convolved image.

From the definition of the convolutional tensor, one can derive the following three important properties.
For the sake of brevity, we omit the proof of them.

\begin{property}
    \label{propty:ele}
    A convolutional tensor $\Hc$ must satisfy
    \begin{equation}
            \Hc_{i, j, k, \ell} = \Hc_{i+1, j, k+1, \ell} = \Hc_{i, j+1, k, \ell+1},
    \end{equation}
    for any valid $i, j, k, \ell$.
\end{property}

\begin{property}
    \label{propty:conv}
    Any four-dimensional tensor $\Hc$ satisfying
    \begin{equation}
            \Hc_{i, j, k, \ell} = \Hc_{i+1, j, k+1, \ell} = \Hc_{i, j+1, k, \ell+1},
            \label{eq:shift}
    \end{equation}
    for any valid $i, j, k, \ell$, must be a convolutional tensor.
\end{property}

This property is of important since it provides an additional support to characterize convolutional tensors.
Note that Property~\ref{propty:ele} and~\ref{propty:conv} together introduce the equality constraints in Equation~\ref{eq:shift} as a sufficient and necessary condition to a convolutional tensor.

\begin{property}
    (Convolution as Tensor Product):
    Convolution of an image filter $\Hv\in\RR^{a\times b}$ and an image $\Xv$ equals to the tensor product of the corresponding convolutional tensor $\Hc$ and the image $\Xv$.
    Formally
    \begin{equation}
        \Hv * \Xv = \Hc \Xv.
    \end{equation}
\end{property}

This property depicts the major role of a convolutional tensor, representing a convolution as a tensor product.
One can prove this property trivially from the definition of convolutions and Definition~\ref{def:conv-tensor}.
One advantage of representing a convolution as a tensor product is to allow us to analyze the parameters of convolutions, \eg stride, pad, \etc,  by manipulating the corresponding convolutional tensor.
Specifically, utilizing non-unity stride in convolution is equivalent to applying a sub-sampling function to the associated convolutional tensor.
Before digging into details, we need one more tool to help, which is a sub-sampling function.

\begin{definition}
    \label{def:sub-sampling-matrix}
    (Sampling function of matrix)
    Define $\Tc_{m, n}^{(s)}(\cdot)$ as a function with a matrix as input and its down-sampled sub-matrix as output.
    This function select the elements on the grid defined by $m,n,s$, where $(m, n)$ denotes the upper-left position of the grid, and $s$ denotes the sub-sampling stride.
    Note that $m, n \le s$.
\end{definition}

To make this definition clearer, Figure~\ref{fig:ex-S} shows an exampling of $\Tc_{m,n}^{(2)}(\cdot)$ applied to a $6\times6$ matrix, for $m = 1, 2$ and $n = 1, 2$ respectively.
From this example, one might notice that the elements in a matrix will be sampled and only sampled once by the proposed sub-sampling functions, without overlapping or double-sampling.
Based on this sub-sampling function of matrix, we further propose a sub-sampling function of four-dimensional tensor.
It is basically the same operator but is conducted on the plane of two designated dimension out of four.

\begin{figure}[tp]
    \centering
    \includegraphics[width=0.8\linewidth]{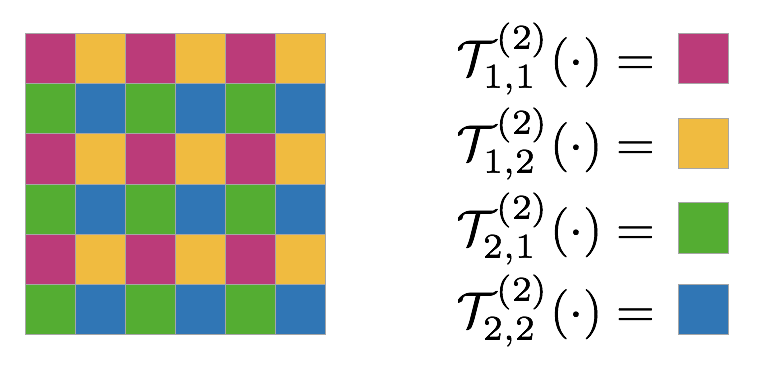}
    \caption{An example of $\Tc_{m, n}^{(2)}$ for $m,n \in\{1, 2\}$. In this case, each sampling function will return a $3\times3$ matrix. Left visualizes the input matrix whose colors denote which sampling function will select it.}
    \label{fig:ex-S}
\end{figure}

\begin{definition}
    \label{def:sub-sampling-tensor}
    (Sampling function of tensor)
    Define $\Sc_{p, q, s}^{i, j}(\cdot)$ as a function sampling the elements of a four-dimension tensor.
    $(i, j)$ denotes the dimension indexes forming a plane on which elements are sampled.
    $(p, q)$ denotes the upper-left position for sub-sampling.
    $s$ denotes the sub-sampling stride size.
\end{definition}

Note that if we treat a matrix as a four-dimensional tensor with size one in third and forth dimension, then $\Tc_{m,n}^{(s)}(\cdot)$ can be represented equivalently as $\Sc_{m, n, s}^{1, 2}(\cdot)$.

After Adding sub-sampling functions to our mathematic arsenal, we are now able to give and prove another property of convolutional tensors, which is of significant importance in the future analysis of strides.

\begin{property}
    \label{prop:double sampling}
    For any positive integer $p, q, m, n \le s$, the sub-sampled tensor
    \begin{equation}
        \Sc_{p, q, s}^{3, 4}\big(\Sc_{m, n, s}^{1, 2}\big(\Hc\big)\big)
    \end{equation}
    is still a convolutional tensor if $\Hc$ is a convolutional tensor.
    Further, this sub-sampled tensor covers all shifted version of a sub-sampled filter
    \begin{equation}
        \Tc_{p, q}^{(s)}\Big(\Oc_{m-1, n-1}\big(\Hv\big)\Big),
    \end{equation}
    where $\Oc_{m,n}(\cdot)$ denotes a padding operation which adds $m$ zero-padding along the first dimension and $n$ zero-padding along the second dimension.
\end{property}
\begin{proof}
    From the Definition~\ref{def:sub-sampling-tensor}, it is trivial to show that the $(i, j, k, \ell)$-th element in $\Sc_{p, q, s}^{3, 4}\big(\Sc_{m, n, s}^{1, 2}\big(\Hc\big)\big)$ equals to~$\Hc_{(i-1)s+m, (j-1)s+n, (k-1)s+p, (\ell-1)s+q}$.

    Since $\Hc$ is a convolutional tensor, from Property~\ref{propty:ele}, it is implied that, for any valid $i, j, k, \ell$, 
    \begin{equation}
        \begin{aligned}
            &\Big[\Sc_{p, q, s}^{3, 4}\big(\Sc_{m, n, s}^{1, 2}\big(\Hc\big)\big)\Big]_{i+1, j, k+1, \ell} \\
            = &\Hc_{(i-1)s+m+s, (j-1)s+n, (k-1)s+p+s, (\ell-1)s+q}\\
            = &\Hc_{(i-1)s+m, (j-1)s+n, (k-1)s+p, (\ell-1)s+q}\\
            = &\Big[\Sc_{p, q, s}^{3, 4}\big(\Sc_{m, n, s}^{1, 2}\big(\Hc\big)\big)\Big]_{i, j, k, \ell}
        \end{aligned}
    \end{equation}
    and, similarly, 
    \begin{equation}
        \begin{aligned}
            &\Big[\Sc_{p, q, s}^{3, 4}\big(\Sc_{m, n, s}^{1, 2}\big(\Hc\big)\big)\Big]_{i, j+1, k, \ell+1} \\
            = &\Hc_{(i-1)s+m, (j-1)s+n+s, (k-1)s+p, (\ell-1)s+q+s}\\
            = &\Hc_{(i-1)s+m, (j-1)s+n, (k-1)s+p, (\ell-1)s+q}\\
            = &\Big[\Sc_{p, q, s}^{3, 4}\big(\Sc_{m, n, s}^{1, 2}\big(\Hc\big)\big)\Big]_{i, j, k, \ell}.
        \end{aligned}
    \end{equation}

    From Property~\ref{propty:conv}, it is indicated that the sub-sampled tensor $\Sc_{p, q, s}^{3, 4}\big(\Sc_{m, n, s}^{1, 2}\big(\Hc\big)\big)$ must be a convolutional tensor.
    As we know the left-upper active area on $\Hc(1, 1, :, :)$ is the corresponding filter to $\Hc$, the filter corersponding to the sub-sampled convolutional tensor is~$\Tc_{p, q}^{(s)}\Big(\Oc_{m-1,n-1}\big(\Hv\big)\Big)$.
\end{proof}

\section{Eliminating stride from Convolution}
Armed with convolutional tensor and sub-sampling functions, we now demonstrate that in a two-dimensional convolution, non-unity stride can be easily reduced to unity stride by rearranging the element positions in images and image filters respectively.
\begin{theorem}
    \label{thm:2d striding}
    Convolving an image filter $\Hv\in\RR^{a\times b}$ with an image $\Xv\in\RR^{c\times d}$ in stride $s$ equals to the summation of regular convolutions of filters $\Hv_{p, q}$ and images $\Xv_{p, q}$ for $p, q = 1, ..., s$.
    Each $\Hv_{p, q}$ is sampled from $\Hv$ and each $\Xv_{p, q}$ is sampled from $\Xv$.
    Formally,
    \begin{equation}
        \Hv*_s\Xv = \sum_{p=1}^s \sum_{q=1}^s \Hv_{p, q} * \Xv_{p, q},
    \end{equation}
    where operator $*_s$ denotes convolution with stride $s$.
\end{theorem}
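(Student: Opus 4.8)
The plan is to route the strided convolution through the tensor-product representation and then split the contraction over the \emph{input} indices of the convolutional tensor into residue classes modulo $s$. Since, as noted above, using stride $s$ amounts to sub-sampling the first two (output) dimensions of the associated convolutional tensor, I would start by writing $\Hv *_s \Xv = \Sc_{1,1,s}^{1,2}(\Hc)\,\Xv$, where $\Hc$ is the convolutional tensor of $\Hv$ (for the given image size). Expanding with Definition~\ref{def:tensor-product} then writes the $(i,j)$-th output entry as the single sum $\sum_{k}\sum_{\ell}[\Sc_{1,1,s}^{1,2}(\Hc)]_{i,j,k,\ell}\,\Xv_{\ell,k}$ over the third and fourth (input) dimensions.

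The heart of the proof is to reorganize this sum. I would substitute $k=(k'-1)s+q$ and $\ell=(\ell'-1)s+p$ with $p,q\in\{1,\dots,s\}$, turning the double sum into $\sum_{p=1}^{s}\sum_{q=1}^{s}$ of an inner sum over $k',\ell'$. Using the index identity established inside the proof of Property~\ref{prop:double sampling}, each inner sum is exactly the tensor product of the doubly sub-sampled tensor $\Sc_{q,p,s}^{3,4}\!\big(\Sc_{1,1,s}^{1,2}(\Hc)\big)$ with the sub-sampled image $\Tc_{p,q}^{(s)}(\Xv)$; the transpose built into Definition~\ref{def:tensor-product} is what makes the third-dimension residue $q$ a column offset of $\Xv$ and the fourth-dimension residue $p$ a row offset. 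Property~\ref{prop:double sampling} then says $\Sc_{q,p,s}^{3,4}\!\big(\Sc_{1,1,s}^{1,2}(\Hc)\big)$ is again a convolutional tensor, whose filter $\Hv_{p,q}$ is a (possibly zero-padded) sub-sample of $\Hv$, so the ``convolution as tensor product'' property turns each inner sum into $\Hv_{p,q}*\Xv_{p,q}$ with $\Xv_{p,q}:=\Tc_{p,q}^{(s)}(\Xv)$. Summing over $p,q$ yields the identity.

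The step I expect to be most delicate is not conceptual but the book-keeping at the boundaries: when $s\nmid c$ or $s\nmid d$, the residue classes of the input indices (and of the filter indices) have slightly unequal sizes, so I must make sure the $s^2$ sub-convolutions are padded to a common output size and aligned so that their sum is well defined and agrees entry-by-entry with $\Hv *_s \Xv$ --- this is precisely the role of the padding operator $\Oc$ in Property~\ref{prop:double sampling}. Keeping straight the two index transpositions (tensor-versus-matrix indices in Definition~\ref{def:tensor-product}, and the $p\leftrightarrow q$ swap between how $\Hv_{p,q}$ and $\Xv_{p,q}$ are sampled) is the other place where care is needed; a final sanity check is the identification $\Hv *_s \Xv = \Sc_{1,1,s}^{1,2}(\Hc)\Xv$ itself, i.e., that striding is indeed output sub-sampling at offset $(1,1)$ under the adopted convention.
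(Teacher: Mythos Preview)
Your proposal is correct and follows essentially the same route as the paper: represent the strided convolution as output sub-sampling of the convolutional tensor, split the contraction over the input indices into residue classes modulo $s$, and invoke Property~\ref{prop:double sampling} to identify each block as the convolution of a sub-sampled filter with a sub-sampled image. The only organizational difference is that the paper packages this splitting step as a separate lemma (Lemma~\ref{lem:sampling to inside}, stated for a general output offset $(m,n)$), so that the proof of Theorem~\ref{thm:2d striding} itself is just the one-line specialization $(m,n)=(1,1)$; there the padding $\Oc_{0,0}$ is the identity, so the boundary concern you flag does not arise.
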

\begin{proof}
    From the definition of stride, it is clear that one can consider convolution with stride size larger than one as convolution followed by a sub-sampling function.
    Formally,
    \begin{equation}
        \Hv *_s \Xv = \Tc_{1, 1}^{(s)} \big(\Hv * \Xv\big).
        \label{eq:reduce stride}
    \end{equation}

    From Lemma~\ref{lem:sampling to inside} below, it is implied that
    \begin{equation}
        \begin{aligned}
            \Tc_{1, 1}^{(s)} \big(\Hv * \Xv\big) &= \sum_{p=1}^s \sum_{q=1}^s \Tc_{p, q}^{(s)}\big(\Hv\big) * \Tc_{p, q}^{(s)}\big(\Xv\big)\\
            \label{eq:conv}
        \end{aligned}
    \end{equation}
    By defining
    \begin{equation}
        \Hv_{p, q} = \Tc_{p, q}^{(s)}\big(\Hv\big), \quad \Xv_{p, q} = \Tc_{p, q}^{(s)}\big(\Xv\big),
    \end{equation}
    Equation~\ref{eq:reduce stride} is proven.
\end{proof}

If one considers each sub-sampled feature map $\Tc_{p, q}^{(s)}\big(\Xv\big)$ as a channel, then the summation of convolutions in Equation~\ref{eq:conv} can be reinterpreted as a multi-channel convolution.
In this sense, Theorem~\ref{thm:2d striding} claims that a multi-stride convolution can always be represented equivalently by a multi-channel convolution.
This insight inspires us to explore the simplicity of a CNN, hoping to find a more elegant and general architecture using solely multi-channel convolution with unity stride in each layer.
We will show in the next section that this non-striding architecture exists for each feedforward all convolutional net and is mathematically proven to be able to achieve comparable results if not better.

Additionally, Theorem~\ref{thm:2d striding} leads to some empirical benefits.
It potentially improves computational efficiency in a practical embedding when estimating non-unity strided convolution with larger filters by multi-channel convolution with smaller filters.
We pause the analysis to the reasons for this computational efficiency, since it involves the low-level structure of data storage in memory, which is out of the interest of our paper.

Finally, this insight potentially offer a novel perspective to answer the question in signal processing: 
Why is stride commonly used in convolutional neural network but rarely found in convolutional sparse coding?
That is caused by the equivalence between multi-stride convolution and multi-channel convolution.
Due to the common usage of multi-channel convolution in convolutional sparse coding, using non-unity stride barely bring any theoretical benefits.
A more detailed analysis is out of the scope of our paper and is considered as the future work.

\begin{lemma}
    \label{lem:sampling to inside}
    For any image filter $\Hv$ and image $\Xv$, it must be true that
    \begin{equation}
        \Tc_{m,n}^{(s)}\big(\Hv * \Xv\big) = \sum_{p=1}^s \sum_{q=1}^s \Tc_{p, q}^{(s)}\Big(\Oc_{m-1, n-1}\big(\Hv\big)\Big)*\Tc_{p, q}^{(s)}\big(\Xv\big)
        \label{eq:lemma}
    \end{equation}
\end{lemma}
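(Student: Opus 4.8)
The plan is to route everything through the tensor-product machinery of Section~3 and then split the tensor contraction into residue classes modulo $s$. Let $\Hc$ denote the convolutional tensor of $\Hv$. First I would rewrite the left-hand side of Equation~\ref{eq:lemma} using the ``convolution as tensor product'' property, $\Hv*\Xv=\Hc\Xv$, so that $\Tc_{m,n}^{(s)}(\Hv*\Xv)=\Tc_{m,n}^{(s)}(\Hc\Xv)$. Then, directly from Definition~\ref{def:tensor-product} together with the definitions of the sampling functions, sub-sampling the \emph{output} of a tensor product on its first two (spatial) dimensions is the same as sub-sampling the \emph{tensor} on those same dimensions before multiplying: $\Tc_{m,n}^{(s)}(\Hc\Xv)=\Sc_{m,n,s}^{1,2}(\Hc)\,\Xv$. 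This is a one-line entrywise check, since $[\Sc_{m,n,s}^{1,2}(\Hc)\,\Xv]_{i,j}$ and $[\Hc\Xv]_{(i-1)s+m,\,(j-1)s+n}$ are literally the same sum.

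Next I would split the remaining contraction, which runs over the third and fourth dimensions of $\Hc$ paired against $\Xv$, into residue classes: every index can be written uniquely as $(k-1)s+p$ with $p\in\{1,\dots,s\}$, and likewise the fourth-dimension index as $(\ell-1)s+q$. Grouping the terms of $[\Sc_{m,n,s}^{1,2}(\Hc)\,\Xv]_{i,j}$ by the pair $(p,q)$ converts the single sum into $\sum_{p=1}^{s}\sum_{q=1}^{s}$ of sums over the re-indexed ``coarse'' coordinates. By the entry identity already established inside the proof of Property~\ref{prop:double sampling}, the coefficient tensor of the $(p,q)$-block is exactly $\Sc_{p,q,s}^{3,4}\big(\Sc_{m,n,s}^{1,2}(\Hc)\big)$, and the matrix it multiplies is the correspondingly sub-sampled $\Tc_{p,q}^{(s)}(\Xv)$. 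Hence $\Sc_{m,n,s}^{1,2}(\Hc)\,\Xv=\sum_{p,q}\Sc_{p,q,s}^{3,4}\big(\Sc_{m,n,s}^{1,2}(\Hc)\big)\,\Tc_{p,q}^{(s)}(\Xv)$.

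Finally I would invoke Property~\ref{prop:double sampling} itself: each $\Sc_{p,q,s}^{3,4}\big(\Sc_{m,n,s}^{1,2}(\Hc)\big)$ is again a convolutional tensor, with associated filter $\Tc_{p,q}^{(s)}\big(\Oc_{m-1,n-1}(\Hv)\big)$. Applying ``convolution as tensor product'' in the reverse direction turns each block back into a convolution, $\Sc_{p,q,s}^{3,4}\big(\Sc_{m,n,s}^{1,2}(\Hc)\big)\,\Tc_{p,q}^{(s)}(\Xv)=\Tc_{p,q}^{(s)}\big(\Oc_{m-1,n-1}(\Hv)\big)*\Tc_{p,q}^{(s)}(\Xv)$, which is precisely the summand appearing in Equation~\ref{eq:lemma}, and summing over $p,q$ finishes the proof.

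The main obstacle is the index bookkeeping in the residue-class split. One must verify that the re-indexing $(k-1)s+p$, $(\ell-1)s+q$ is simultaneously compatible with the sub-sampling convention of $\Sc^{3,4}$ acting on the tensor and of $\Tc^{(s)}$ acting on $\Xv$, while threading through the transposed indexing $\Xv_{\ell,k}$ of Definition~\ref{def:tensor-product} consistently; this is also where the zero-padding operator $\Oc_{m-1,n-1}$ earns its keep, since it absorbs the ``off-grid'' shift created when $m,n>1$ and makes the tensor and matrix sizes on both sides match. Once that alignment is pinned down, everything else is a routine entrywise comparison using only the properties already proved in Section~3.
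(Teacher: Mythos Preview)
Your proposal is correct and follows essentially the same route as the paper's own proof: rewrite $\Hv*\Xv$ as the tensor product $\Hc\Xv$, pull the outer sub-sampling inside to get $\Sc_{m,n,s}^{1,2}(\Hc)\,\Xv$, split the contraction over the last two dimensions into residue classes modulo $s$ to obtain $\sum_{p,q}\Sc_{p,q,s}^{3,4}\big(\Sc_{m,n,s}^{1,2}(\Hc)\big)\,\Tc_{p,q}^{(s)}(\Xv)$, and then invoke Property~\ref{prop:double sampling} together with the convolution/tensor-product equivalence to identify each summand. Your write-up is simply more explicit about the index bookkeeping that the paper compresses into the single remark that the $\Sc_{p,q,s}^{3,4}$ cover every element exactly once.
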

\begin{proof}
    From Definition~\ref{def:conv-tensor}, it is known that
    \begin{equation}
        \Hv * \Xv = \Hc \Xv.
    \end{equation}
    Further, from Definition~\ref{def:sub-sampling-tensor} and~\ref{def:sub-sampling-matrix}, it is implied that
    \begin{equation}
        \Tc_{m, n}^{(s)}\big(\Hv * \Xv) = \Sc_{m, n, s}^{1, 2}\big(\Hc\big)\Xv.
    \end{equation}
    Since sampling functions $\Sc_{p, q, s}^{(3, 4)}(\cdot)$ for all $p, q = 1,...,s$ will cover all elements and only cover once, it is indicated that
    \begin{equation}
        \Sc_{m, n, s}^{1, 2}\big(\Hc\big)\Xv = \sum_{p=1}^s \sum_{q=1}^s \Sc_{p, q, s}^{3, 4}\big(\Sc_{m, n, s}^{1, 2}\big(\Hc\big)\big) \Tc_{p, q}^{(s)}\big(\Xv\big).
    \end{equation}
    From the Property~\ref{prop:double sampling} above, $\Sc_{p, q, s}^{3, 4}\big(\Sc_{m, n, s}^{1, 2}\big(\Hc\big)\big)$ is a convolutional tensor and is generated by the image filter $\Tc_{p, q}^{(s)}\Big(\Oc_{m-1, n-1}\big(\Hv\big)\Big)$. 
    Therefore, due to the equivalence between convolution and tensor product, Equation~\ref{eq:lemma} is proven.
\end{proof}

\section{Eliminating Stride from CNN}
\label{sec:CNN}
Now, we propose a novel strategy to eliminate non-unity strided convolutions from CNN.
From Theorem~\ref{thm:2d striding}, we are allowed to replace a non-unity strided convolution by a unity strided convolution without loss of performance for evaluation.
However, this replacement might not be trivial in a CNN architecture as any changes of the number of channels in one layer will affect the structures of other layers.
To make this difficulty clearer and also give a direct insight, let us consider a simple three layer example.
Note that, to be able to focus on convolution and stride, we will follow~\cite{springenberg2014striving} to replace max-pooling by a convolutional layer with increased stride, and restrict ourselves to the analysis of all convolutional net in this section.

\begin{figure}[tp]
    \centering
    \subfloat[The original architecture]{\includegraphics[width=\linewidth]{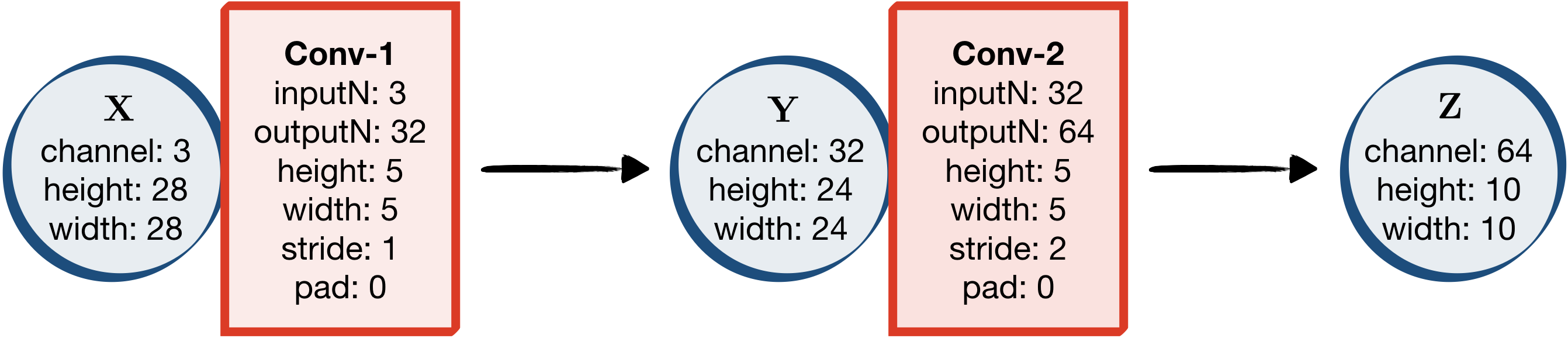}}\\
    \subfloat[Replace Conv-2 by Conv-2' and $\Yv$ by $\Yv'$. ]{\includegraphics[width=\linewidth]{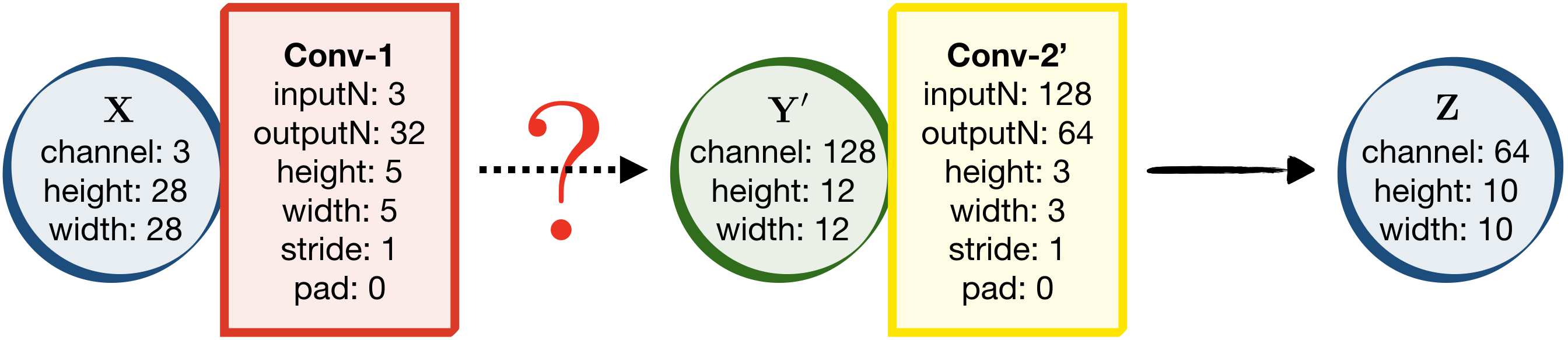}}\\
    \subfloat[Replace Conv-1 by Conv-1' and $\Xv$ by $\Xv'$.]{\includegraphics[width=\linewidth]{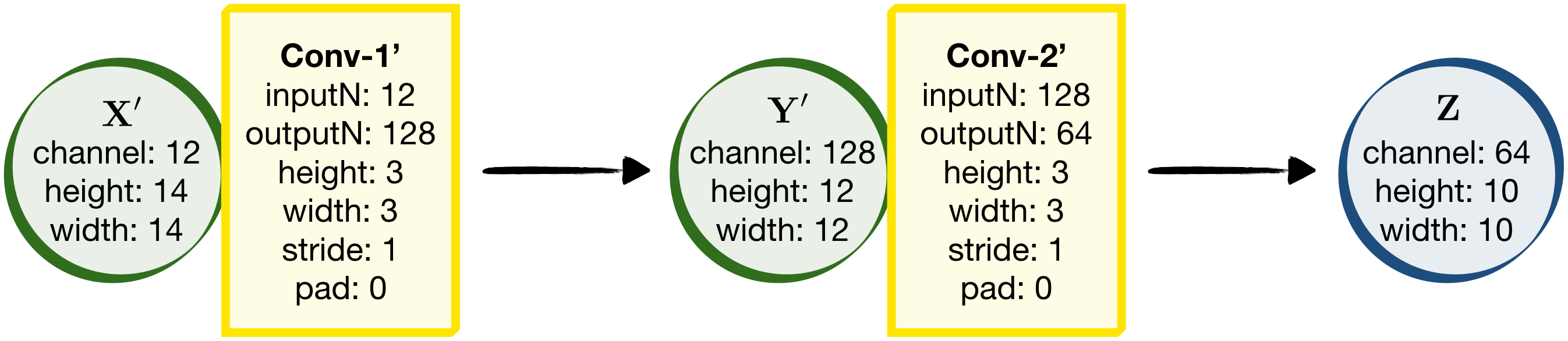}}
    \caption{A three layer CNN example. It shows that solely utilizing Theorem~\ref{thm:2d striding} is not sufficient to eliminate multi-stride from an all convolutional net.
    how to derive a non-striding architecture from a toy all convolutional net and also visualize the utility of Theorem~\ref{thm:2d striding} and~\ref{thm: all-conv-net}.}
    \label{fig:ex-3layer}
\end{figure}

Suppose that we have a three layer CNN whose architecture is shown in Figure~\ref{fig:ex-3layer}~(a).
To eliminate non-unity strided convolutions from this example, one can replace the convolution layer Conv-2 by a new convolution layer Conv-2' which utilizes unity stride and $\Yv$ by $\Yv'$ which rearranges the element positions of $\Yv$.
From Theorem~\ref{thm:2d striding}, it is guaranteed that this replacement will hold the same output as before.
The new architecture is shown in Figure~\ref{fig:ex-3layer}~(b).
However, one may notice that this replacement ruins the connection from the previous layer Conv-1 to $\Yv'$, resulting in a failure of forward pass.

To solve the problem and reconnect the previous layer to the modified current layer, we need to revisit Lemma~\ref{lem:sampling to inside}.
As one observes, each channel of $\Yv'$ is a certain sub-sampling of the feature map in a channel of $\Yv$.
Further each channel of $\Yv$ is the result of convolving $\Xv$ with associated filters in Conv-1.
Therefore, from Lemma~\ref{lem:sampling to inside}, it is implied that each channel of $\Yv'$ is actually the summation of convolving all possible sub-sampled $\Xv$ with sub-sampled filters in Conv-1.
In this perspective, to connect two layers, one can replace Conv-1 by a multi-channel convolution layer, Conv-1', whose filter has more channels but in smaller size, and at the same time $\Xv$ by $\Xv'$, where each channel of $\Xv'$ is a certain sub-sampling of a channel of $\Xv$.
Since again the sub-sampling function will cover and only cover all elements once, $\Xv'$ actually rearranges the element positions of~$\Xv$, sharing the same number and values of elements.

By this strategy, we derived a new architecture shown in Figure~\ref{fig:ex-3layer}~(c) that only utilizes convolutions with unity stride and are guaranteed to have a complete forward pass.
More importantly, this new architecture also hold the equivalence to the original multi-stride CNN in terms of the evaluations of forward pass, guaranteed by Lemma~\ref{lem:sampling to inside}.
To offer a deeper insight and prove this procedure more rigorously, we next propose and prove Theorem~\ref{thm: all-conv-net}, summarizing the strategy above and generalize it to additional layers.

\begin{theorem}
    \label{thm: all-conv-net}
    Any all convolutional net can be simplified to a non-striding architecture where only unity stride is utilized by each convolution layer, such that these two architectures are mathematically equivalent with respect to the evaluation of forward pass.
\end{theorem}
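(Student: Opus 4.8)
The plan is to propagate the reorganization of Theorem~\ref{thm:2d striding} layer by layer, starting at the top of the network and moving down to its input, each time using Lemma~\ref{lem:sampling to inside} to ``absorb'' a sub-sampling into the convolution of the preceding layer. I would index the convolution layers by $\ell = 1, \dots, L$, write $s_\ell$ for the stride of layer $\ell$, let $\Yv^{(\ell)}$ denote its output feature map with $\Yv^{(0)} = \Xv$, and model layer $\ell$ as a multi-channel convolution with filters $\Hv^{(\ell)}_{c, c'}$ followed by a pointwise nonlinearity $\sigma$ (per-channel biases and zero-padding behave in exactly the same way and I would suppress them for readability; the restriction to all convolutional nets is what lets me assume this form). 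Three elementary observations would do the work: (i) $\sigma$ commutes with every sampling operator, since sampling merely selects a subset of entries, so $\Tc_{p, q}^{(s)}\big(\sigma(\Yv)\big) = \sigma\big(\Tc_{p, q}^{(s)}(\Yv)\big)$; (ii) $\Hv *_s \Xv = \Tc_{1, 1}^{(s)}\big(\Hv * \Xv\big)$, already used inside Theorem~\ref{thm:2d striding}; and (iii) a composition of samplings is again a sampling, $\Tc_{p, q}^{(s')}\!\circ\Tc_{1, 1}^{(s)} = \Tc_{1 + (p-1)s,\, 1 + (q-1)s}^{(ss')}$, which falls out of Definition~\ref{def:sub-sampling-matrix} by composing the two grids.

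The construction is a downward induction. Set $S_{L+1} = 1$ and $S_k = s_k S_{k+1}$, so that $S_k = \prod_{\ell = k}^{L} s_\ell$. The inductive claim is: for every $k = L, L-1, \dots, 1$ there is a sub-network $\widetilde{\mathcal{N}}_k$ built only from unity-stride multi-channel convolutions and nonlinearities such that $\widetilde{\mathcal{N}}_k\big(R_{k-1}(\Yv^{(k-1)})\big) = \Yv^{(L)}$, where $R_{k-1}$ replaces each channel $\Yv^{(k-1)}_{c'}$ by its $S_k^2$ sub-samplings $\Tc_{r, t}^{(S_k)}(\Yv^{(k-1)}_{c'})$, $r, t = 1, \dots, S_k$. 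For the base case $k = L$, Theorem~\ref{thm:2d striding} applied channelwise already rewrites $\sum_{c'} \Hv^{(L)}_{c, c'} *_{s_L} \Yv^{(L-1)}_{c'}$ as a unity-stride multi-channel convolution in the variables $\Tc_{r, t}^{(s_L)}(\Yv^{(L-1)}_{c'})$, and appending $\sigma$ gives $\widetilde{\mathcal{N}}_L$; note the top output $\Yv^{(L)}$ is reproduced \emph{exactly}, not just up to a permutation of its entries. For the inductive step I would take a generic channel $\Tc_{p, q}^{(S_{k+1})}(\Yv^{(k)}_c)$ of $R_k(\Yv^{(k)})$, push the sampling past $\sigma$ by (i), convert $*_{s_k}$ into $\Tc_{1, 1}^{(s_k)}(\cdot)$ of an ordinary convolution by (ii), merge the two nested samplings into a single sampling of stride $S_k$ by (iii), and finally invoke Lemma~\ref{lem:sampling to inside} to express that single sampling of $\Hv^{(k)}_{c, c'} * \Yv^{(k-1)}_{c'}$ as a sum over $r, t$ of ordinary convolutions of sub-sampled (and suitably zero-padded) filters with the channels $\Tc_{r, t}^{(S_k)}(\Yv^{(k-1)}_{c'})$. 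Summing this over $c'$ and over $(r, t)$ displays layer $k$ as a single unity-stride multi-channel convolution whose input channels are precisely those of $R_{k-1}(\Yv^{(k-1)})$ and whose output channels are precisely those of $R_k(\Yv^{(k)})$; prepending it to $\widetilde{\mathcal{N}}_{k+1}$ yields $\widetilde{\mathcal{N}}_k$, and applying $\sigma$ after the summation over input channels is exactly what a convolution layer does, so the rewriting is faithful.

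At the end of the recursion, $k = 1$, the input required by $\widetilde{\mathcal{N}}_1$ is $R_0(\Xv)$, whose $S_1^2$ channels are the samplings $\Tc_{r, t}^{(S_1)}(\Xv)$; since by Definition~\ref{def:sub-sampling-matrix} these grids tile $\Xv$ with no overlap and no omission, $R_0(\Xv)$ is nothing but a rearrangement of the entries of $\Xv$ into $S_1^2$ channels --- exactly the operation $\Xv \mapsto \Xv'$ already used informally in the three-layer example. This gives an all convolutional net that uses unity stride in every layer and, on this rearrangement of $\Xv$, reproduces $\Yv^{(L)}$, which is the claimed equivalence. I expect the only real obstacle to be bookkeeping rather than anything conceptual: keeping the sampling offsets consistent when grids from layers of different stride are composed, and verifying that an original multi-channel layer with $C$ input channels does collapse to a single unity-stride layer with $C\,S_k^2$ input channels once the sampling has been absorbed. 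Everything else reduces to iterated use of the three observations above together with Theorem~\ref{thm:2d striding} and Lemma~\ref{lem:sampling to inside}, and the nonlinearities, biases and padding cause no difficulty because each of them commutes with sampling.
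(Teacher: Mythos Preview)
Your proposal is correct and follows essentially the same route as the paper: a downward induction from the last layer using Theorem~\ref{thm:2d striding} for the base case, then at each intermediate layer commuting the sampling past the pointwise nonlinearity, rewriting $*_{s_\ell}$ as $\Tc_{1,1}^{(s_\ell)}$ of an ordinary convolution, composing the two samplings into one of stride $ss_\ell$, and invoking Lemma~\ref{lem:sampling to inside}. Your explicit cumulative stride $S_k=\prod_{\ell\ge k}s_\ell$ and the invariant $\widetilde{\mathcal N}_k\big(R_{k-1}(\Yv^{(k-1)})\big)=\Yv^{(L)}$ make the bookkeeping cleaner than in the paper, but the argument is the same.
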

\begin{proof}
%
    Suppose we have an all convolutional net with $L$ layers, where each layer has $C_\ell$ channels and utilizes $s_\ell$ as stride size.
    Denote $\Xv^{(\ell, c)}$ as the feature map in $\ell$-th layer $c$-th channel and $\Hv^{(\ell, c)}$ is the corresponding filter.
    We now use mathematical induction to prove this theorem.

    \textbf{The last layer (basis):}
    From the architecture, we have
    \begin{equation}
        \Xv^{(L, c)} = \eta\{\sum_{k=1}^{C_{L-1}} \Hv^{(L-1, k)} *_{s_{L-1}} \Xv^{(L-1, k)}\},
    \end{equation}
    where function~$\eta(\cdot)$ is an element-wise non-linear activation function such as Rectified Linear Unit(ReLU).
    From Theorem~\ref{thm:2d striding}, it is derived that
    \begin{equation}
        \label{eq:basis}
        \Xv^{(L, c)} = \eta\{\sum_{k=1}^{C_{L-1}} \sum_{p=1}^{s_{L-1}} \sum_{q=1}^{s_{L-1}} \Hv^{(L-1, k)}_{p, q} * \Xv^{(L-1, k)}_{p, q}\},
    \end{equation}
    where
    \begin{equation}
        \Xv_{p, q}^{(L-1, k)} = \Tc_{p, q}^{(s_{L-1})}\big(\Xv^{(L-1, k)}\big),
    \end{equation}
    \begin{equation}
        \Hv_{p, q}^{(L-1, k)} = \Tc_{p, q}^{(s_{L-1})}\big(\Hv^{(L-1, k)}\big).
        \label{eq: Hv-last}
    \end{equation}

    For convenience, let us define $\tilde{\Xv}^{(L-1)}$ as a feature map with $C_{L-1}\cdot s_{L-1} \cdot s_{L-1}$ channels.
    Each channel of it is $\Xv_{p, q}^{(L-1, k)}$ for $p, q=1,...,s_{L-1}$ and $k=1,...,C_{L-1}$.
    Similarly, define $\tilde{\Hv}^{(L-1, c)}$ as a filter with $C_{L-1}\cdot s_{L-1}\cdot s_{L-1}$ channels.
    Each channel of it is $\Hv_{p, q}^{(L-1, k)}$ for $p, q=1,...,s_{L-1}$ and $k=1,...,C_{L-1}$.
    Then the right hand side of Equation~\ref{eq:basis} can be reinterpreted as a multi-channel convolution between $\tilde{\Xv}^{(L-1)}$ and $\tilde{\Hv}^{(L-1, c)}$. Formally,
    \begin{equation}
        \Xv^{(L, c)} = \eta\{\tilde{\Xv}^{(L-1)} * \tilde{\Hv}^{(L-1, c)}\}.
    \end{equation}

    This basis step shows that one can always replace the last convolution layer by a multi-channel convolution with unity stride, such that the last layer feature map $\Xv^{(L, c)}$ stay the same.
    Next, we will show how to represent each channel of $\tilde{\Xv}^{(L-1)}$ by $\Xv^{(L-2)}$ and $\Hv^{(L-2)}$.

    \textbf{Intermediate layer (inductive step):}
    Let us consider the $\ell$-th layer, where $\ell < L-1$.
    Given any valid positive integer $c \le C_{l+1}, m \le s, n \le s$ and $s$, one can show that a sub-sampling of activation layer is equivalent to sub-sampling before the activation layer as long as the activation layer is element-wise.
    Formally,
    \begin{eqnarray}
        &&\Tc_{m, n}^{(s)}\big(\Xv^{(\ell+1, c)}\big)\nonumber\\
        &= &\Tc_{m,n}^{(s)}\Big(\eta\{\sum_{k=1}^{C_\ell} \Hv^{(\ell, k)} *_{s_\ell} \Xv^{(\ell, k)} \}\Big) \nonumber\\
        &= &\eta\Big\{\Tc_{m,n}^{(s)}\big(\sum_{k=1}^{C_\ell} \Hv^{(\ell, k)} *_{s_\ell} \Xv^{(\ell, k)} \big)\Big\}.
    \end{eqnarray}

    Further, from the definition of stride, one can replace the multi-stride convolution by a single-stride convolution followed by a sub-sampling function. Formally,
    \begin{eqnarray}
        &&\Tc_{m, n}^{(s)}\big(\Xv^{(\ell+1, c)}\big)\nonumber\\
        &=&\eta\Big\{\Tc_{m,n}^{(s)}\big(\sum_{k=1}^{C_\ell} \Hv^{(\ell, k)} *_{s_\ell} \Xv^{(\ell, k)} \big)\Big\} \nonumber \\
        &= &\eta\Big\{\Tc_{m,n}^{(s)}\Big(\sum_{k=1}^{C_\ell} \Tc_{1, 1}^{(s_\ell)} \big(\Hv^{(\ell, k)} * \Xv^{(\ell, k)}\big) \Big)\Big\}\nonumber \\
          &= &\eta\Big\{\sum_{k=1}^{C_\ell} \Tc_{m,n}^{(s)}\Big(\Tc_{1, 1}^{(s_\ell)} \big(\Hv^{(\ell, k)} * \Xv^{(\ell, k)}\big) \Big)\Big\}
        \label{eq:inter}
    \end{eqnarray}

    It is clear to see that the composition of two sub-sampling function is also a sub-sampling function:
    \begin{equation}
        \Tc_{m,n}^{(s)} \circ \Tc_{1, 1}^{(s_\ell)} = \Tc_{m', n'}^{(s')},
    \end{equation}
    where, for the sake of brevity, we omit the derivation but give the results:
    \begin{equation}
        m' = (m-1)s_\ell +1, ~ n'=(n-1)s_\ell+1, ~ s' = ss_\ell.
    \end{equation}
    Therefore, by combining the two sub-sampling functions, Equation~\ref{eq:inter} can be simplified as
    \begin{equation}
        \Tc_{m, n}^{(s)}\big(\Xv^{(\ell+1, c)}\big) = \eta\Big\{\sum_{k=1}^{C_\ell} \Tc_{m', n'}^{(s')} \big(\Hv^{(\ell, k)} * \Xv^{(\ell, k)}\big)\Big\}.
    \end{equation}
    From the Lemma~\ref{lem:sampling to inside}, it is implied that
    \begin{equation}
        \Tc_{m, n}^{(s)}\big(\Xv^{(\ell+1, c)}\big) = \eta\Big\{\sum_{k=1}^{C_\ell} \sum_{p=1}^{s'} \sum_{q=1}^{s'} \Hv^{(\ell, k)}_{p,q, m',n'} * \Xv^{(\ell, k)}_{p, q}\Big\},
        \label{eq:rearrange}
    \end{equation}
    where
    \begin{equation}
        \Hv^{(\ell, k)}_{p,q, m',n'} = \Tc_{p, q}^{(s')} \Big(\Oc_{m'-1, n'-1}\big(\Hv^{(\ell, k)}\big)\Big),
        \label{eq: Hv}
    \end{equation}
    \begin{equation}
        \Xv^{(\ell, k)}_{p, q} = \Tc_{p, q}^{(s')}\big(\Xv^{(\ell, k)}\big).
        \label{eq: Xv}
    \end{equation}

    Let us define $\tilde{\Xv}^{(\ell)}$ as a feature map with $C_{\ell}\cdot s' \cdot s'$ channels.
    Each channel of it is $\Xv_{p, q}^{(\ell, k)}$ for $p, q=1,...,s'$ and $k=1,...,C_{\ell}$.
    Similarly, define $\tilde{\Hv}^{(\ell, c')}$ as a filter with $C_{\ell}\cdot  s' \cdot s'$ channels, where $c'$ depends on $(m, n, c)$.
    Each channel of it is $\Hv_{p, q, m',n'}^{(\ell, k)}$ for $p, q=1,...,s_{L-1}$ and $k=1,...,C_{L-1}$.
    Then the left hand side of Equation~\ref{eq:rearrange} can be considered as the $c'$-th channel of $\tilde{\Xv}^{(\ell+1)}$.
    The right hand side of Equation~\ref{eq:rearrange} can be reinterpreted as a multi-channel convolution between $\tilde{\Xv}^{(\ell)}$ and $\tilde{\Hv}^{(\ell, c')}$. Formally,
    \begin{equation}
        \tilde{\Xv}^{(\ell+1, c')} = \eta\big\{\tilde{\Xv}^{(\ell)} * \tilde{\Hv}^{(\ell, c')}\big\}.
    \end{equation}

    This inductive step shows that one can always represent each channel of $\tilde{\Xv}^{(\ell+1)}$ as the result of a multi-channel convolution with unity stride of $\tilde{\Xv}^{(\ell)}$ and a multi-channel filter.


    \textbf{Mathematical Induction:}
    In summary, in an all convolutional net, the followings are proven true:
    \begin{itemize}
        \item The last layer feature map $\Xv^{(L)}$ can always be represented by $\tilde{\Xv}^{(L-1)}$ with a unity strided convolution followed by a non-linear function.
        \item For any $\ell < L-1$, the feature map $\tilde{\Xv}^{(\ell+1)}$ can always be represented by $\tilde{\Xv}^{(\ell)}$ with a unity strided convolution followed by a non-linear function.
        \item Since sub-sampling function $\Tc_{m,n}^{(s)}$ covers and only covers once all elements, the feature map $\tilde{\Xv}^{(\ell)}$ is a certain reshape of $\Xv^{(\ell)}$.
    \end{itemize}

    From mathematical induction, it is implied that, for any all convolutional net, one can reshape the feature map sequentially from the last layer to the first layer and thus derive a new architecture where only unity stride is utilized by each convolution layers.
    As a result of more elegant structures without non-unity stride, we refer to this new architecture as the non-striding architecture.
    Moreover, because the equality always holds during reshaping, shown in Equation~\ref{eq:basis} and Equation~\ref{eq:rearrange}, the non-striding architecture must be equivalent to the original one with respect to the output of forward pass.
\end{proof}

\section{Discussion and Implication}
The theory described so far guarantees the existence of the non-striding architecture for any all convolutional net and the proof of Theorem~\ref{thm: all-conv-net} further provides a methodology to recover it.
However, one might wonder what we can benefit from eliminating stride.
In this section, we will compare in detail the non-striding architecture against the original network, further establish theoretical and practical benefits, and also note a few practical limitations of our framework.

One notable difference between the non-striding and original architecture is the dimension of images fed into the network.
The non-striding architecture is required to take $\tilde{\Xv}^{(0)}$ as input which is a certain reshape of normal images $\Xv^{(0)}$.
This step can be conducted before training or evaluating the neural network by prepocessing images.
Thus it potentially transfers some computation from neural network to preprocessing step.

Another significant difference is the size of filters between these two architectures.
For example, in the architecture shown in Figure~\ref{fig:ex-3layer}, one can see that the size of filter in Conv-1 is $3\times 32\times 5\times 5$, however in Conv-1' it is $12\times 128\times 3\times 3$.
More generally, by denoting $H, W$ as the height and width of the filter and following the notations in Section~\ref{sec:CNN}, the size of $\Hv^{(\ell)}$ is $C_{\ell}\cdot C_{\ell+1}\cdot W_{\ell} \cdot H_{\ell}$, while the size of $\tilde{\Hv}^{(\ell)}$ is $s^2\cdot C_{\ell}\cdot C_{\ell+1}\cdot W_{\ell} \cdot H_{\ell}$.
This is because, in $\tilde{\Hv}^{(\ell)}$, the number of input channel becomes $(s')^2$ more and the number of output channel becomes $s^2$ more, but filter size only becomes $(s')^2$ smaller.
This increased size of filters indicates that the non-striding architecture has higher capacity than the original one.
Even though this augmented capacity is potentially to increase the performance, it also possibly ruins the training due to too large searching area.

The reason why this additional capacity is not observed during forward passing a pre-trained model is that the strategy we proposed naturally forces some parameter sharing in the non-striding architecture.
Specifically, in Equation~\ref{eq: Hv}, it is clear that filters $\Hv_{p, q, m',n'}^{(\ell, k)}$ with different $m'$ and $n'$ are forced to share parameters since they are all sampled from the same filters $\Hv^{(\ell, k)}$.
This finding unveils an important role of striding in a CNN, that is striding is an efficient practice to force parameters shared among different channels.
Specifically, one can observe that using non-unity stride in the original architecture is equivalent to sharing parameters among different channels in the non-striding architecture.
This decrease of degree of freedom leads to a lower expressibility of network but higher possibility of searching a good local minimum.
This makes striding still an invaluable practice to help training in an empirical usage.

The final contribution of our work is to pursuit the simplest architecture of a CNN.
Following the work~\cite{springenberg2014striving} suggesting to replace max-pooling by convolution with larger stride, our work proposes to replace non-unity strided convolutions by unity strided convolutions without loss of performance.
This further simplifies the neural network, leading to a more elegant architecture with only non-linearity and classical convolution without the need of designing stride size.
We hope this non-striding architecture shall facilitate and simplify the future theoretical analysis of CNNs.

\begin{table*}[]
    \centering
    \begin{tabular}{|c|c|c|c|c|c|c|c|c|}
        \hline
        \rowcolor[HTML]{86b1f5} 
        \textbf{LeNet} & \multicolumn{6}{c|}{\cellcolor[HTML]{86b1f5}\textbf{Architecture}}  & \textbf{Accuracy} & \textbf{Training Time} \\ \hline
    \cellcolor[HTML]{c3d9fb}Non-Unity Stride & \begin{tabular}{@{}c@{}}Input image \\ $1\times 28\times 28$\end{tabular} & \begin{tabular}{@{}c@{}}$5\times5$ \\ conv. 20 \end{tabular} & \begin{tabular}{@{}c@{}}$2\times2$ \\ conv. 20 \\\textbf{stride 2}\end{tabular} & \begin{tabular}{@{}c@{}} $5\times5$ \\ conv. 50 \end{tabular} & \begin{tabular}{@{}c@{}} $2\times2$ \\ conv. 50 \\ \textbf{stride 2}\end{tabular} & \begin{tabular}{@{}c@{}} fc. 500 \\ ReLU \end{tabular}& $98.17\%$ & $451.91$ sec\\ \hline
        \cellcolor[HTML]{c3d9fb}Unity Stride     & \begin{tabular}{@{}c@{}}Input image \\ $16\times 7\times 7$\end{tabular} & \begin{tabular}{@{}c@{}}$2\times2$ \\ conv. 320 \end{tabular} & \begin{tabular}{@{}c@{}}$1\times1$ \\ conv. 80 \end{tabular} & \begin{tabular}{@{}c@{}} $3\times3$ \\ conv. 200 \end{tabular} & \begin{tabular}{@{}c@{}} $1\times1$ \\ conv. 50 \end{tabular} & \begin{tabular}{@{}c@{}} fc. 500 \\ ReLU \end{tabular} & $98.23\%$ & $680.97$ sec \\ \hline
    \end{tabular}
    \caption{Model description of the two networks derived from LeNet. ``Non-Unity Stride'' denotes the architecture using non-unity strided convolution layer, while ``unity stride'' denotes the architecture using unity strided convolution layer. ``Accuracy'' refers to as the testing accuracy after independently training two architectures. As an example, ``$5\times5$ conv. 20'' denotes as a convolution layer whose kernel size is $5\times5$ and the number of output channel is 20. }
    \label{tab:exp-arc}
    
    \vspace{8mm}

    \begin{tabular}{|c|c|c|c|c|c|c|}
        \hline
        \rowcolor[HTML]{c6d7c1} 
        \textbf{LeNet} & \textbf{Input Image} &\multicolumn{5}{c|}{\cellcolor[HTML]{c6d7c1}\textbf{Feature Map}}\\ \hline
    \cellcolor[HTML]{e3ece1}Non-Unity Stride & $1\times 28\times 28$ & $20\times24\times24$ & $20\times12\times12$ & $50\times8\times8$ & $50\times4\times4$ & 500\\ \hline
        \cellcolor[HTML]{e3ece1}Unity Stride     & $16\times 7\times 7$ & $320\times6\times6$ & $80\times6\times6$ & $200\times4\times4$ & $50\times4\times4$ & 500\\ \hline
    \end{tabular}
    \caption{Listing the dimension of each feature maps in the two architectures, utilizing channel$\times$height$\times$width format.}
    \label{tab:exp-var}
\end{table*}

\section{A Toy Experiment}
The theory described so far is very general and provides a guarantee to the existence of the non-striding architecture to arbitrary all convolutional net.
To verify the theory and also provide a sense from a concrete neural network, we conducted a toy experiment applying a modified LeNet~\cite{lecun1998gradient} to MNIST dataset.
LeNet is a perfect target in our case due to its small scale, leading to a clear conclusion, quick verification, and most importantly being able to restrict the non-striding architecture to a similar scale to the original one.

Following the work~\cite{springenberg2014striving}, we first replace all pooling layers in LeNet by a convolution with stride size two.
The architecture is shown in the Table~\ref{tab:exp-arc}, named non-unity stride.
We then utilize the proposed strategy to eliminate non-unity strided convolutions by unity strided convolutions, resulting to a non-striding architecture, shown in the Tabel~\ref{tab:exp-arc}, named unity stride.
As one can see, non-striding architecture utilizes smaller filters but with more channels, as expected and indicated by Theorem~\ref{thm:2d striding}.
Additionally, for the convenience of analyzing feature maps, we list the dimension of each feature map in Table~\ref{tab:exp-var}.
As observed, in each layer, the length (not the shape) of the feature map stays the same in both architectures, as expected and indicated by Theorem~\ref{thm: all-conv-net}.
Recall that $\tilde{\Xv}^{(\ell)}$ is a certain reshape of $\Xv^{(\ell)}$.

To compare these two architectures in terms of performance, we conducted over two different settings.
In the first setting, we only train the network with striding and then copy the learned parameters to the non-striding architecture.
In this case, the performance of these two architectures are exactly the same, as expected and proven by Theorem~\ref{thm: all-conv-net}; the results are therefore omitted.
In the other setting, we train both of these two architectures from scratch without sharing any parameters.
The accuracy and time consumed for training are listed in Table~\ref{tab:exp-arc}.
One can clearly see from the table that the non-striding architecture achieves a higher accuracy but consuming more time during training, as expected.
Recall that the non-striding architecture has a larger parameter space, resulting in an increase of capacity.

\section{Conclusion}
This paper, to our best knowledge, at the first time proposed to represent a non-unity strided convolution by a unity strided convolution with multi-channels.
Based on this insight, we demonstrated that in any all convolutional net, non-unity strided convolution can be replaced by unity strided convolution without loss of performance for evaluations.
Hereby, any feedfoward CNN are proven to have a mathematically equivalent non-striding CNN architecture during evaluation such that it consists only of non-linearity and regular convolution with solely unity stride.
We hope this non-striding architecture shall facilitate and simplify the future theoretical analysis of CNNs.
Finally, by observing an increase in the capacity of the non-striding architecture, we demonstrated that striding reduces the number of trainable variables in convolutional filters by sharing parameters.
This finding makes striding still a useful practice when designing CNN architectures.

{\small
\bibliographystyle{ieee}
\bibliography{egbib}

\begin{thebibliography}{10}\itemsep=-1pt

\bibitem{bengio2003neural}
Y.~Bengio, R.~Ducharme, P.~Vincent, and C.~Jauvin.
\newblock A neural probabilistic language model.
\newblock {\em Journal of machine learning research}, 3(Feb):1137--1155, 2003.

\bibitem{dong2016image}
C.~Dong, C.~C. Loy, K.~He, and X.~Tang.
\newblock Image super-resolution using deep convolutional networks.
\newblock {\em IEEE transactions on pattern analysis and machine intelligence},
  38(2):295--307, 2016.

\bibitem{farabet2013learning}
C.~Farabet, C.~Couprie, L.~Najman, and Y.~LeCun.
\newblock Learning hierarchical features for scene labeling.
\newblock {\em IEEE transactions on pattern analysis and machine intelligence},
  35(8):1915--1929, 2013.

\bibitem{gatys2015neural}
L.~A. Gatys, A.~S. Ecker, and M.~Bethge.
\newblock A neural algorithm of artistic style.
\newblock {\em arXiv preprint arXiv:1508.06576}, 2015.

\bibitem{he2016deep}
K.~He, X.~Zhang, S.~Ren, and J.~Sun.
\newblock Deep residual learning for image recognition.
\newblock In {\em Proceedings of the IEEE conference on computer vision and
  pattern recognition}, pages 770--778, 2016.

\bibitem{hinton2012deep}
G.~Hinton, L.~Deng, D.~Yu, G.~E. Dahl, A.-r. Mohamed, N.~Jaitly, A.~Senior,
  V.~Vanhoucke, P.~Nguyen, T.~N. Sainath, et~al.
\newblock Deep neural networks for acoustic modeling in speech recognition: The
  shared views of four research groups.
\newblock {\em IEEE Signal Processing Magazine}, 29(6):82--97, 2012.

\bibitem{johnson2016perceptual}
J.~Johnson, A.~Alahi, and L.~Fei-Fei.
\newblock Perceptual losses for real-time style transfer and super-resolution.
\newblock In {\em European Conference on Computer Vision}, pages 694--711.
  Springer, 2016.

\bibitem{krizhevsky2012imagenet}
A.~Krizhevsky, I.~Sutskever, and G.~E. Hinton.
\newblock Imagenet classification with deep convolutional neural networks.
\newblock In {\em Advances in neural information processing systems}, pages
  1097--1105, 2012.

\bibitem{lecun1990handwritten}
Y.~LeCun, B.~E. Boser, J.~S. Denker, D.~Henderson, R.~E. Howard, W.~E. Hubbard,
  and L.~D. Jackel.
\newblock Handwritten digit recognition with a back-propagation network.
\newblock In {\em Advances in neural information processing systems}, pages
  396--404, 1990.

\bibitem{lecun1998gradient}
Y.~LeCun, L.~Bottou, Y.~Bengio, and P.~Haffner.
\newblock Gradient-based learning applied to document recognition.
\newblock {\em Proceedings of the IEEE}, 86(11):2278--2324, 1998.

\bibitem{mikolov2013cient}
T.~Mikolov, K.~Chen, G.~Corrado, and J.~rey Dean.
\newblock E cient estimation of word representations in vector space. arxiv
  preprint.
\newblock {\em arXiv preprint arXiv:1301.3781}, 2013.

\bibitem{papyan2016convolutional}
V.~Papyan, Y.~Romano, and M.~Elad.
\newblock Convolutional neural networks analyzed via convolutional sparse
  coding.
\newblock {\em arXiv preprint arXiv:1607.08194}, 2016.

\bibitem{simonyan2014very}
K.~Simonyan and A.~Zisserman.
\newblock Very deep convolutional networks for large-scale image recognition.
\newblock {\em arXiv preprint arXiv:1409.1556}, 2014.

\bibitem{springenberg2014striving}
J.~T. Springenberg, A.~Dosovitskiy, T.~Brox, and M.~Riedmiller.
\newblock Striving for simplicity: The all convolutional net.
\newblock {\em arXiv preprint arXiv:1412.6806}, 2014.

\bibitem{szegedy2015going}
C.~Szegedy, W.~Liu, Y.~Jia, P.~Sermanet, S.~Reed, D.~Anguelov, D.~Erhan,
  V.~Vanhoucke, and A.~Rabinovich.
\newblock Going deeper with convolutions.
\newblock In {\em Proceedings of the IEEE conference on computer vision and
  pattern recognition}, pages 1--9, 2015.

\bibitem{ulyanov2016texture}
D.~Ulyanov, V.~Lebedev, A.~Vedaldi, and V.~S. Lempitsky.
\newblock Texture networks: Feed-forward synthesis of textures and stylized
  images.
\newblock In {\em ICML}, pages 1349--1357, 2016.

\bibitem{zeiler2014visualizing}
M.~D. Zeiler and R.~Fergus.
\newblock Visualizing and understanding convolutional networks.
\newblock In {\em European conference on computer vision}, pages 818--833.
  Springer, 2014.

\end{thebibliography}
}

\end{document}